\documentclass{article}

\usepackage{natbib} 
\usepackage{enumitem} 
\usepackage{multirow}
\usepackage{microtype}
\usepackage{graphicx}
\usepackage{subcaption}
\usepackage{array}
\usepackage{booktabs} 
\usepackage{balance}
\usepackage{hyperref}

\usepackage[accepted]{icml2026}

\usepackage{amsmath}
\usepackage{amssymb}
\usepackage{mathtools}
\usepackage{amsthm}

\usepackage[capitalize,noabbrev]{cleveref}

\theoremstyle{plain}
\newtheorem{theorem}{Theorem}[section]

\newtheorem{lemma}[theorem]{Lemma}
\newtheorem{corollary}[theorem]{Corollary}
\newtheorem*{corollary*}{Corollary} 
\theoremstyle{definition}

\newtheorem{assumption}{Assumption}[section]

\theoremstyle{remark}
\newtheorem{remark}[theorem]{Remark}
\usepackage[textsize=tiny]{todonotes}

\icmltitlerunning{T-GINEE: A Tensor-Based Multilayer Graph Representation Learning}

\begin{document}

\twocolumn[
  \icmltitle{T-GINEE: A Tensor-Based Multilayer Graph Representation Learning}




\begin{icmlauthorlist}
    \icmlauthor{Maolin Wang}{cityu}
        \icmlauthor{Ziting Mai}{cityu}
    \icmlauthor{Xuhui Chen}{cityu}
    \icmlauthor{Zhiqi Li}{cityu,saais}
    \icmlauthor{Tianshuo Wei}{cityu}
    \icmlauthor{Yutian Xiao}{buaa}
    \icmlauthor{Wenlin Zhang}{cityu}
    \icmlauthor{Wanyu Wang}{cityu}
    \icmlauthor{Ruocheng Guo}{indep}
    \icmlauthor{Haoxuan Li}{pku}
    \icmlauthor{Zenglin Xu}{fudan,saais}
    \icmlauthor{Xiangyu Zhao}{cityu}
\end{icmlauthorlist}

\icmlaffiliation{cityu}{City University of Hong Kong, Hong Kong SAR, China}
\icmlaffiliation{buaa}{Beihang University, Beijing, China}
\icmlaffiliation{indep}{Independent Researcher}
\icmlaffiliation{pku}{Peking University, Beijing, China}
\icmlaffiliation{fudan}{Fudan University, Shanghai, China}
\icmlaffiliation{saais}{Shanghai Academy of AI for Science, Shanghai, China}

\icmlcorrespondingauthor{Xiangyu Zhao}{xianzhao@cityu.edu.hk}

  \icmlkeywords{Machine Learning, ICML}

  \vskip 0.3in
]



\printAffiliationsAndNotice{\mbox{}}  

\begin{abstract}
While traditional network analysis focuses on single-layer networks, real-world systems often form multilayer networks with multiple relationship types. However, existing methods typically fail to capture complex inter-layer dependencies by treating layers independently or aggregating them. To address this, we propose T-GINEE (Tensor-Based Generalized Multilayer-graph Estimating Equation), a statistical regularization framework combining tensor-based generalized estimating equations with task-specific loss to model cross-network correlations explicitly. Key innovations include: (1) CP tensor decomposition capturing structural dependencies via shared latent factors; (2) a generalized estimating equation framework modeling inter-layer correlations through working covariance matrices; and (3) a flexible link function accommodating characteristics like sparsity. Our theoretical analysis establishes consistency and asymptotic normality under mild conditions. 
Extensive experiments on synthetic and real-world datasets validate T-GINEE’s effectiveness for multilayer network analysis. Our code is available at {https://github.com/Applied-Machine-Learning-Lab/ICML2026$\_$T-GINEE}.
\end{abstract}

\section{Introduction}

In the real world, interactions between entities are often multifaceted, with these multi-relational characteristics engaging one another under varied circumstances or through distinct modalities. For instance, in social networks~\cite{van2002differences}, individuals may be connected through multiple relationship types such as friends, colleagues, and family. In biology~\cite{zheng2019control}, genes or proteins exhibit various collaboration schemes like co-expression and physical interactions. In global trade, countries exchange a wide range of different commodities.

For such intricate relational landscapes, a multi-layer graph offers a faithful and structured representation. This architecture is defined by a common set of vertices, where each layer is endowed with a unique edge set to delineate a specific type of relation. Such graphs are prevalent across numerous disciplines, including social graphs that capture multiple interaction channels between individuals~\cite{greene2013producing}, biological graphs detailing different collaboration schemes among genes or proteins~\cite{li2020active, liu2020robustness}, and global trade graphs mapping the exchange of various commodities~\cite{alves2019nested, ren2020bridging}. To effectively analyze these intricate structures, a fundamental step is to learn low-dimensional vector representations (i.e., embeddings) for the entities that capture the complex relational information encoded across layers.

Numerous approaches have been developed for graph embedding, employing various techniques such as similarity indices~\cite{boden2017mimag}, maximum likelihood models~\cite{yuan2021community}, matrix factorization~\cite{tang2009clustering, dong2012clustering, gligorijevic2016fusion}, and graph neural networks~\cite{kipf2016semi, hamilton2017inductive, xu2018powerful}. For multilayer graph embedding, which provides a richer representation of complex systems~\cite{kivela_etal_multilayer_2014}, analysis often involves extending these single-layer techniques. Prominent approaches include tensor-based methods that leverage the natural tensor structure of multilayer graphs~\cite{kolda2009tensor,aguiar2024tensorfactorizationmodelmultilayer}, as well as adaptations of deep learning models like GCNs and random-walk embeddings~\cite{ghorbani2019mgcn, song2018improved}.

However, a critical challenge underlying many of these methods is the lack of a rigorous theoretical foundation for the multi-layer context. While embedding learning has proven effective for single-layer graphs~\cite{cai2018comprehensive,wang2023federated}, we lack robust theoretical frameworks systematically characterizing the embedding process across multiple layers~\cite{interdonato2020multilayer}. This absence of formal tools describing how embeddings capture and preserve cross-layer dynamics significantly impedes developing principled approaches, representing a fundamental limitation in the field~\cite{shanthamallu2019gramme, jiao2021effective, lyu2023latent}.

Without this theoretical guidance, existing approaches often resort to simplistic solutions, such as learning representations for layers independently~\cite{tang2009clustering, dong2012clustering} or using basic aggregation techniques~\cite{paul2020spectral, lei2020consistent}. These methods lack the grounding to explain how embeddings should optimally encode the nuanced ways in which relationships in one layer might influence or contradict another~\cite{liu2017principled, zhang2018scalable}. This deficit is especially problematic for real-world systems where entities engage through multiple relation types simultaneously~\cite{xu2020attentional, yang2020multisage}, highlighting the urgent need for new frameworks that can faithfully represent this complex interplay~\cite{wang2024tensorized,huang2020mr, shanthamallu2019gramme}.

To address these challenges, we propose T-GINEE (Tensor-based Generalized Multilayer-graph Estimating Equation), a statistical regularization framework that combines tensor-based generalized estimating equations with task-specific loss to explicitly model cross-network correlations. The key technical innovations of T-GINEE include: (1) A CP tensor decomposition approach that effectively captures structural dependencies through shared latent factors while maintaining computational efficiency; (2) A generalized estimating equation framework that explicitly models the correlations between different network layers through working covariance matrices; and (3) A flexible link function design that accommodates various network characteristics, including sparsity. Unlike previous approaches that rely on simple aggregation or separate modeling~\cite{paul2020spectral,tang2009clustering,lei2020consistent}, T-GINEE provides a principled statistical framework to jointly model multiple networks. 

Overall, T-GINEE integrates a symmetric CP
tensor decomposition with a generalized estimating equation (GEE) formulation, provides asymptotic statistical guarantees, and validates the framework on both synthetic and real-world multilayer networks.
The main contributions are summarized as follows:

\begin{itemize}[leftmargin=*]
    \item \textbf{Tensor-based Statistical Framework}: We propose a regularization framework combining tensor CP decomposition with generalized estimating equations. This approach explicitly models cross-network dependencies via a principled formulation while ensuring tractability.

    \item \textbf{Theoretical Guarantees}: We establish T-GINEE's consistency and asymptotic normality for the full-batch estimating-equation formulation under explicit regularity and dimensional-growth assumptions. These results clarify when the tensor-based score equations provide statistically reliable estimation.

    \item \textbf{Empirical Validation}: Comprehensive experiments on synthetic and real-world networks demonstrate T-GINEE's effectiveness, including link prediction, community detection, GNN comparisons, covariance ablations, robustness checks, and large-scale profiling.
\end{itemize}

\begin{figure*}[t]
    \centering
    \includegraphics[width=1.6\columnwidth]{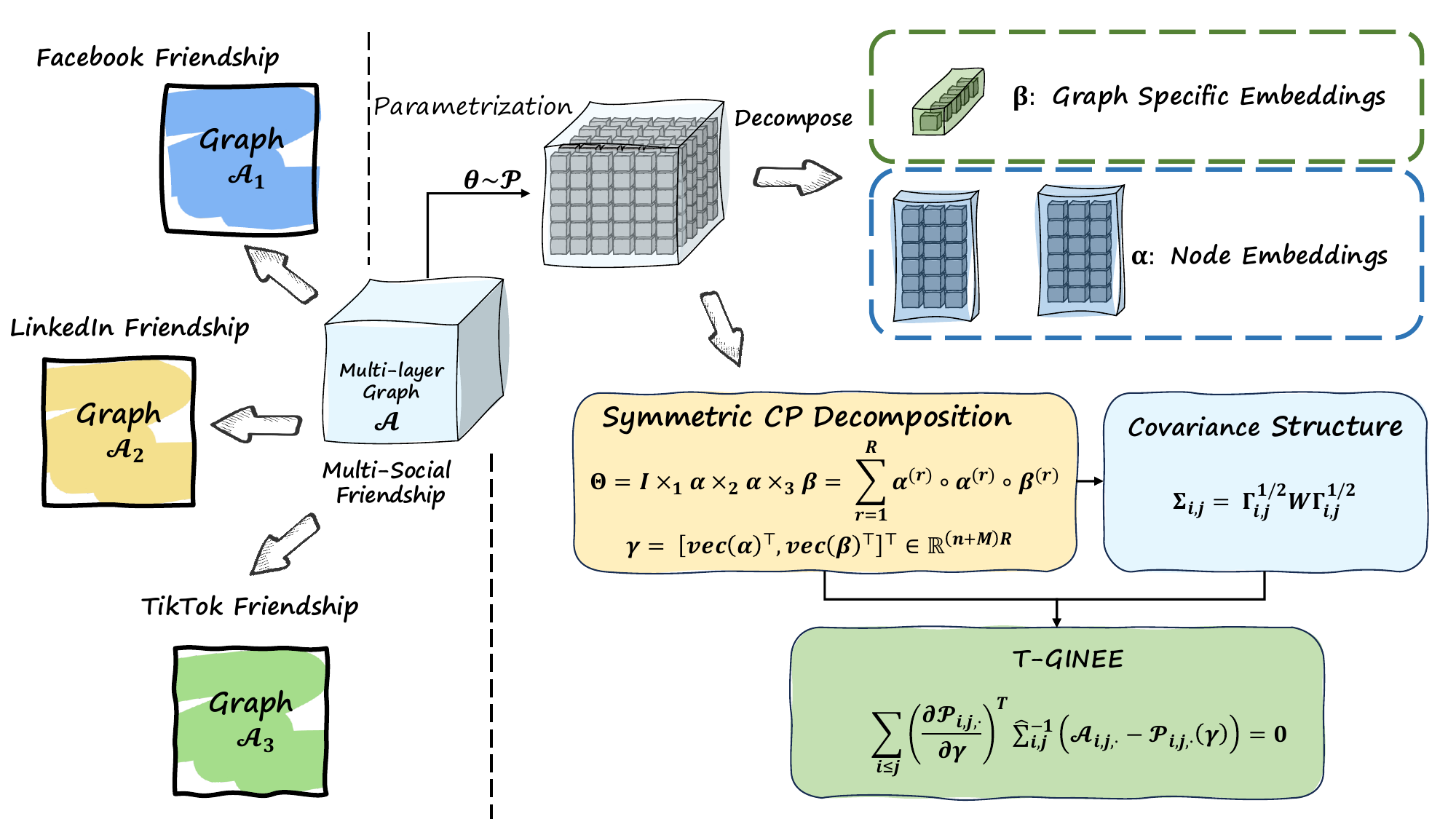}
    \caption{Schematic overview of the T-GINEE framework. Given a multilayer adjacency tensor $\mathcal{A}$, we introduce a parameter tensor $\Theta$ and perform a symmetric CP decomposition to obtain node embeddings $\alpha$ and layer-specific embeddings $\beta$. These embeddings are concatenated into a parameter vector $\gamma$. By solving tensor-based generalized estimating equations (T-GINEE) under a working covariance structure, the model learns $\gamma$ and captures cross-layer dependencies in multilayer social networks (e.g., Facebook, LinkedIn, TikTok).}
    \label{fig:tginee-framework}
\end{figure*}

\section{Methodology}

In this section, we present Tensor-based Generalized Estimating Equations (T-GINEE), a framework
for learning embeddings from multilayer graphs. Our method combines a low-rank CP
parameterization of a multilayer graph with a generalized estimating equations (GEE) estimator,
equipped with a structured working covariance, to jointly model within-layer and cross-layer
dependencies.

\subsection{Overview}

Real-world networks often exhibit complex interdependencies, where multiple network structures
coexist and influence each other. For instance, an individual's friendship networks on multiple
social media platforms (such as Facebook, LinkedIn, and TikTok) form correlated multilayer
graphs over the same set of users. We propose a statistical regularization framework that leverages
tensor-based generalized estimating equations to explicitly model such cross-network correlations.
Our framework, referred to as T-GINEE, consists of several core components illustrated in
Figure~\ref{fig:tginee-framework}: (i) a symmetric CP decomposition of the parameter tensor
$\Theta$ of the multilayer graph into node embeddings $\alpha$ and layer-specific embeddings
$\beta$; (ii) the construction of a parameter vector $\gamma$ from these embeddings; and
(iii) a tensor-based GEE formulation that estimates $\gamma$ under a working covariance model
and thereby captures complex dependencies across layers.

\subsection{Preliminaries and notation}

We briefly introduce the tensor and estimating-equation notation used below. For vectors
$a\in\mathbb{R}^{n}$ and $b\in\mathbb{R}^{m}$, $a\otimes b$ denotes the Kronecker product. For
matrices $A=[a^{(1)},\ldots,a^{(R)}]$ and $B=[b^{(1)},\ldots,b^{(R)}]$ with the same number of
columns, $A\odot_{\mathrm{KR}}B=[a^{(1)}\otimes b^{(1)},\ldots,a^{(R)}\otimes b^{(R)}]$ denotes
the Khatri--Rao product. A rank-$R$ CP decomposition writes a third-order tensor as a sum of
rank-one outer products. In our symmetric multilayer setting, this takes the form
$\Theta=\sum_{r=1}^{R}\alpha^{(r)}\circ\alpha^{(r)}\circ\beta^{(r)}$, where $\alpha$ contains
node factors and $\beta$ contains layer factors. The GEE component treats
$\mathcal{A}_{i,j,\cdot}$ as an $M$-dimensional response for node pair $(i,j)$ and weights its
residual by a working covariance matrix. This lets T-GINEE model cross-layer residual dependence
without specifying a full joint distribution over the entire adjacency tensor.


\subsection{Problem formulation}
\label{sec:problem-formulation}

Consider a multilayer network/graph
$
\mathcal{G} = \bigl(\mathcal{V}, \{\mathcal{G}^{(m)}\}_{m=1}^M\bigr),
$, 
where $\mathcal{V} = \{v_1, \ldots, v_n\}$ is a common set of vertices that interact across $M$
different but potentially correlated network layers. Each layer
$\mathcal{G}^{(m)} = (\mathcal{V}, \mathcal{E}^{(m)})$ captures a distinct type of relationship.
We focus on undirected networks and index undirected edges by unordered pairs $\{i,j\}$, using the convention that $i \le j$.
In our notation we include pairs with $i=j$; in the empirical datasets we consider, the diagonal
entries are identically zero (i.e., $\mathcal{A}_{i,i,m} \equiv 0$), so allowing $i=j$ does not
affect estimation but simplifies asymptotic analysis.

Let $\mathcal{A} \in \{0,1\}^{n \times n \times M}$ be the adjacency tensor, where
$\mathcal{A}_{i,j,m} = 1$ indicates an edge of type $m$ between nodes $i$ and $j$, and
$\mathcal{A}_{i,j,m} = 0$ otherwise. For each pair $(i,j)$ with $i \le j$, the vector
$\mathcal{A}_{i,j,\cdot} \in \mathbb{R}^M$ is treated as an $M$-dimensional binary response with
mean $\mathcal{P}_{i,j,\cdot}(\Theta)$ and covariance matrix $\Sigma_{i,j}$. In line with the GEE
paradigm, we specify only its first two moments, that is
$\mathbb{E}[\mathcal{A}_{i,j,\cdot}] = \mathcal{P}_{i,j,\cdot}(\Theta)$ and
$\operatorname{Cov}(\mathcal{A}_{i,j,\cdot}) = \Sigma_{i,j}$, without imposing a full joint
distribution.
Conditional on the parameters, we assume that the edge vectors
$\{\mathcal{A}_{i,j,\cdot} : i \le j\}$ are independent across different node pairs $(i,j)$.
Within each pair $(i,j)$, the components
$\mathcal{A}_{i,j,1}, \dots, \mathcal{A}_{i,j,M}$ may be correlated and this dependence
is captured by $\Sigma_{i,j}$.

The parameter tensor $\Theta \in \mathbb{R}^{n \times n \times M}$ is linked to the mean tensor
$\mathcal{P}$ through a known three-times continuously differentiable link function $g$, applied
elementwise, such that
\[
\mathcal{P}_{i,j,\cdot} = g^{-1}\bigl(\Theta_{i,j,\cdot}\bigr).
\]
Typical choices of the link function include:
\begin{itemize}[leftmargin=*]
    \item the identity link $g(x) = x$, primarily used for weighted networks where
    $\mathcal{A}_{i,j,m}$ need not be binary; for Bernoulli observations, the identity link can
    be applied by implicitly constraining $\Theta_{i,j,m} \in [0,1]$;
    \item the probit link $g(x) = \Phi^{-1}(x)$ with inverse
    $g^{-1}(x) = \Phi(x) = \int_{-\infty}^{x} (2\pi)^{-1/2} \exp(-t^2/2)\,\mathrm{d}t$;
    \item the logit link $g(x) = \log\{x/(1-x)\}$ with inverse
    $g^{-1}(x) = 1/(1+e^{-x})$;
    
    \item a sparsity-aware logit $g(x) = \log\{x/(s-x)\}$ with inverse
    $g^{-1}(x) = \dfrac{s}{1+e^{-x}}$, where $0 < s < 1$ is a sparsity coefficient that can decrease
    with $n$ and $M$ to accommodate increasingly sparse networks. When $s$ is allowed to depend
    on $(n,M)$, we assume there exists $\varepsilon > 0$ such that, for all $(i,j,m)$ and all
    $\gamma$ in a neighborhood of $\gamma_0$,
    \[
      \varepsilon \le \mathcal{P}_{i,j,m}(\gamma) \le s - \varepsilon,
    \]
    which in particular guarantees that $g'$ and the required higher-order derivatives remain
    uniformly bounded, in line with Assumption~3.1.
    
\end{itemize}

\subsection{Low-rank tensor decomposition}
\label{sec:cp-motivation}

Before introducing the formal CP decomposition~\cite{wang2023tensor,kolda2009tensor}, we briefly motivate this modeling choice. In many
multilayer systems, the same set of nodes participates in different relation types (layers) through a
shared, low-dimensional set of latent traits (e.g., social roles, functional modules, or economic
profiles). A symmetric CP factorization of the parameter tensor $\Theta$ reflects this idea: the node
embeddings $\alpha$ define a common latent space across all layers, while the layer embeddings
$\beta$ determine how each relation type weights these latent factors. This yields a compact
representation that captures higher-order interactions across nodes and layers, reduces the number
of free parameters, and aligns with empirical findings that a relatively small number of latent
dimensions often suffices to explain multilayer network structure. 

The formal asymptotic analysis in Section~\ref{sec:theory} is stated for the full-batch
estimating-equation regime under explicit dimensional-growth assumptions. The scalable
mini-batch implementation used for massive graphs is described separately in
Appendix~\ref{app:scalability}.

To effectively model structural dependencies while maintaining computational efficiency, we employ
a symmetric CP decomposition for the parameter tensor $\Theta$:
\begin{equation}
\Theta = \mathcal{I} \times_1 \alpha \times_2 \alpha \times_3 \beta
= \sum_{r=1}^R \alpha^{(r)} \circ \alpha^{(r)} \circ \beta^{(r)},
\end{equation}
where $\mathcal{I} \in \mathbb{R}^{R \times R \times R}$ denotes the order-3 identity tensor used
in CP decomposition, $\alpha \in \mathbb{R}^{n \times R}$ contains node embeddings and
$\beta \in \mathbb{R}^{M \times R}$ contains layer-specific embeddings. Here $\alpha^{(r)}$ and
$\beta^{(r)}$ denote the $r$-th columns of $\alpha$ and $\beta$, respectively, and
$\circ$ denotes the outer product. This parameterization enforces
$\Theta_{i,j,m} = \Theta_{j,i,m}$ for all $i,j,m$, consistent with undirected layers, and
inherently imposes structural constraints through shared latent factors.

For optimization purposes, we vectorize these factor matrices into a compact representation:
\begin{equation}
\gamma = \bigl[\operatorname{vec}(\alpha)^\top,\ \operatorname{vec}(\beta)^\top\bigr]^\top
\in \mathbb{R}^{(n+M)R}.
\end{equation}
This parameter vector $\gamma$ encapsulates the essential cross-layer dependencies and serves as
the decision variable in our estimating equations. The low-rank tensor decomposition offers several
advantages: it significantly reduces the number of free parameters, improving computational
efficiency and mitigating overfitting; by sharing latent factors across dimensions, it naturally
captures the inherent relationships between nodes and layers; and it yields interpretable
components, where $\alpha$ represents node-level patterns and $\beta$ captures layer-level
characteristics.

\subsection{Tensor-based statistical regularization}

We now introduce the tensor-based generalized estimating equations that define T-GINEE.
A detailed derivation is provided in \textbf{Appendix}~\ref{appendix:proof1}. The tensor-based estimating equations for
multilayer graphs are
\begin{equation}
\sum_{i \le j}
\left(\frac{\partial \mathcal{P}_{i,j,\cdot}}{\partial \gamma}\right)^\top
\widehat{\Sigma}_{i,j}^{-1}
\bigl(\mathcal{A}_{i,j,\cdot} - \mathcal{P}_{i,j,\cdot}(\gamma)\bigr)
= \mathbf{0},
\label{eq:ginee}
\end{equation}
where $\widehat{\Sigma}_{i,j}$ is a working covariance matrix. We denote the left-hand side
of~\eqref{eq:ginee} by $s(\gamma)$; thus T-GINEE seeks a root $\hat{\gamma}$ of
the estimating equation $s(\gamma)=\mathbf{0}$.

To compute the score contributions, we first derive
$\partial \mathcal{P}_{i,j,\cdot} / \partial \gamma$ via the chain rule: starting from the
Jacobian $\partial \operatorname{vec}(\Theta) / \partial \gamma$ implied by the CP
decomposition, then applying the derivative of the link function, and finally projecting onto
the $(i,j)$-th fibers using basis tensors $\mathcal{E}^{(i,j,m)}$.

Specifically, since $\mathcal{P}_{i,j,m} = g^{-1}(\Theta_{i,j,m})$ and $g$ is differentiable,
we have
\begin{equation}
\resizebox{0.9\hsize}{!}{$
\frac{\partial \mathcal{P}_{i,j,\cdot}}{\partial \gamma}
=
\bigl[
\operatorname{diag}\bigl(g'(\mathcal{P}_{i,j,1}), \ldots, g'(\mathcal{P}_{i,j,M})\bigr)
\bigr]^{-1}
\frac{\partial \Theta_{i,j,\cdot}}{\partial \gamma}
$}
\label{eq:jacobian-1}
\end{equation}
where $g'$ denotes the derivative of $g$, applied elementwise, and the diagonal matrix has
$(m,m)$-th entry $g'(\mathcal{P}_{i,j,m})$ for $m \in [M]$.
Let $\mathcal{E}^{(i,j,m)} \in \mathbb{R}^{n \times n \times M}$ be the tensor unit whose
$(i',j',m')$-th entry is $\mathbf{1}\{(i,j,m)=(i',j',m')\}$. Then
\begin{equation}
\resizebox{0.9\hsize}{!}{$
\frac{\partial \Theta_{i,j,\cdot}}{\partial \gamma}
=
\begin{bmatrix}
\operatorname{vec}(\mathcal{E}^{(i,j,1)}),
\ldots,
\operatorname{vec}(\mathcal{E}^{(i,j,M)})
\end{bmatrix}^\top
\frac{\partial \operatorname{vec}(\Theta)}{\partial \gamma}
$}
\label{eq:jacobian-2}
\end{equation}

Under the CP decomposition of $\Theta$, the Jacobian matrix with respect to the parameter vector
$\gamma$ takes the block form

\begin{align}
\frac{\partial \operatorname{vec}(\Theta)}{\partial \gamma}
=
\begin{bmatrix}
(\beta^{(1)})^\top \otimes \bigl(I_n \otimes \alpha^{(1)} + \alpha^{(1)} \otimes I_n\bigr) \\
\vdots \\
(\beta^{(R)})^\top \otimes \bigl(I_n \otimes \alpha^{(R)} + \alpha^{(R)} \otimes I_n\bigr) \\
I_M \otimes \bigl(\alpha \odot_{\mathrm{KR}} \alpha\bigr)
\end{bmatrix},
\label{eq:jacobian-3}
\end{align}
where $\otimes$ denotes the Kronecker product, and
$\odot_{\mathrm{KR}}$ denotes the Khatri--Rao (column-wise Kronecker) product:
\[
\alpha \odot_{\mathrm{KR}} \alpha
=
\bigl[
\alpha^{(1)} \otimes \alpha^{(1)},\,
\dots,\,
\alpha^{(R)} \otimes \alpha^{(R)}
\bigr]
\in \mathbb{R}^{n^2 \times R}.
\]
Each of the first $R$ block rows in \eqref{eq:jacobian-3} has dimension
$(n^2 M) \times n$, and the last block row
$I_M \otimes \bigl(\alpha \odot_{\mathrm{KR}} \alpha\bigr)$
has dimension $(n^2 M) \times (M R)$, so that
$\partial \operatorname{vec}(\Theta)/\partial \gamma \in \mathbb{R}^{(n^2 M) \times (n+M)R}$.
See \textbf{Appendix}~\ref{appendix:proof1} for a detailed derivation of
$\partial \operatorname{vec}(\Theta)/\partial \gamma$.

Combining
\eqref{eq:jacobian-1}–\eqref{eq:jacobian-3} yields the full expression for
$\partial \mathcal{P}_{i,j,\cdot}/\partial \gamma$, and the complete formulation is presented
in \textbf{Appendix}~\ref{sec:fullformulation}.

\subsection{Covariance structure and estimation}

The cross-layer dependencies within each node pair $(i,j)$ are summarized by the covariance
matrices $\Sigma_{i,j} = \operatorname{Cov}(\mathcal{A}_{i,j,\cdot})$. In line with the GEE
framework, we approximate these true covariances by a parsimonious working covariance family that
assumes a common correlation structure across node pairs:
\begin{equation}
\Sigma^{\mathrm{w}}_{i,j}(\gamma) = \Gamma_{i,j}^{1/2} W \Gamma_{i,j}^{1/2},
\label{eq:covariance}
\end{equation}
where $\Gamma_{i,j} \in \mathbb{R}^{M \times M}$ is a diagonal matrix whose $(m,m)$-th entry is
$\mathcal{P}_{i,j,m}(\gamma)\bigl(1-\mathcal{P}_{i,j,m}(\gamma)\bigr)$, and $W \in \mathbb{R}^{M\times M}$
is a positive-definite correlation matrix shared across node pairs. The working covariance matrices
appearing in~\eqref{eq:ginee} are then
\[
\widehat{\Sigma}_{i,j}
= \Gamma_{i,j}^{1/2} \widehat{W} \Gamma_{i,j}^{1/2},
\]
where $\widehat{W}$ is an empirical estimate of $W$.

We estimate $W$ by pooling residuals across all node pairs:
\begin{equation}
\resizebox{0.9\hsize}{!}{$
\widehat{W}
= \frac{1}{N} \sum_{i \le j}
\Gamma_{i,j}^{-1/2}
\bigl(\mathcal{A}_{i,j,\cdot} - \mathcal{P}_{i,j,\cdot}(\hat{\gamma})\bigr)
\bigl(\mathcal{A}_{i,j,\cdot} - \mathcal{P}_{i,j,\cdot}(\hat{\gamma})\bigr)^\top
\Gamma_{i,j}^{-1/2}
$}
\label{eq:correlation}
\end{equation}
where $\hat{\gamma}$ is the current estimate of $\gamma$, $N = n(n+1)/2$ is the number of node pairs with $i \le j$, and the sum runs over all such pairs, consistent with our convention in Section~\ref{sec:cp-motivation}. 
{Under the boundedness and working-covariance assumptions, the eigenvalues of $\Sigma^{\mathrm{w}}_{i,j}(\gamma)$ are
bounded away from zero and infinity, which ensures numerical stability and underpins our
asymptotic analysis in Section~\ref{sec:theory}.} In practice, to avoid numerical instability
when inverting $\widehat{\Sigma}_{i,j}$, we may add a small ridge term $\epsilon I_M$ to
$\widehat{W}$.

\subsection{Optimization and computational complexity}
\label{sec:optimization}

In practice, we do not solve the nonlinear estimating equations~\eqref{eq:ginee} in closed form.
Instead, we optimize $\gamma$ using iterative gradient-based updates, alternating with periodic
updates of the working correlation matrix $W$.

\paragraph{Optimization procedure.}
At a high level, each training epoch consists of:
(i) computing the CP-based parameter tensor $\Theta$ and the corresponding edge probabilities
$\mathcal{P}_{i,j,m}(\gamma)$ for sampled node pairs $(i,j)$ and layers $m$;
(ii) evaluating the score contributions in~\eqref{eq:ginee} via the Jacobian
$\partial \mathcal{P}_{i,j,\cdot} / \partial \gamma$; and
(iii) updating $\gamma$ by a gradient step (with optional regularization), followed by an update
of $W$ using~\eqref{eq:correlation}. In our implementation we initialize $W$ as the identity $I_M$ (an independence working structure) and start updating it after a few warm-up epochs.

\paragraph{Per-iteration complexity and scalability.}
In the dense case, computing the CP-based parameter tensor $\Theta$ and the corresponding edge
probabilities $\mathcal{P}_{i,j,m}(\gamma)$ for all node pairs and layers requires
$O(R n^2 M)$ operations per full pass, similar to standard CP factorization. Evaluating the score
function in~\eqref{eq:ginee} has the same order, since it aggregates contributions over all
$(i,j,m)$.

In most real-world settings, multilayer networks are sparse. Using sparse tensor representations
and mini-batching over observed edges, the dominant cost per iteration scales as $O(R |E|)$, where
$|E|$ is the total number of observed edges across all layers. The update of the working
correlation $W$ in~\eqref{eq:correlation} is computed from aggregated residuals and can be
performed infrequently (e.g., every $K$ epochs), so its amortized overhead is small compared
to the main embedding updates.
These properties make T-GINEE applicable to moderate-scale multilayer graphs (e.g., $n$ in the
thousands and $M$ in the tens) when combined with sparse tensor implementations. Scaling to large-scale graphs necessitates efficient sampling strategies; a detailed discussion on this adaptation is provided in \textbf{Appendix}~\ref{app:scalability}.

\section{Theoretical Results of T-GINEE}
\label{sec:theory}

In this section, we present the foundational theoretical properties of the full-batch T-GINEE estimating equation. The mini-batch negative-sampling implementation used for massive graphs (Appendix~\ref{app:scalability}) is a practical scalable approximation evaluated empirically; extending the asymptotic guarantees to that regime remains an open problem. Full proofs are provided in \textbf{Appendix}~\ref{appendix:proof}.

\subsection{Assumptions}
\label{sec:assumptions}

\begin{assumption}[Boundedness]\label{ass:boundedness}
There exist constants $0<\varepsilon<1/2$ and $C<\infty$ such that for all $(i,j,m)$ and all parameters $\gamma$ in a neighborhood of $\gamma_0$,
\begin{equation}
\scalebox{0.7}{$
\boxed{
  |\mathcal{A}_{i,j,m}| \le C,\qquad
  \varepsilon \le \mathcal{P}_{i,j,m}(\gamma) \le 1-\varepsilon,\qquad
  |g'(\mathcal{P}_{i,j,m}(\gamma))| \le C
}
$}
\end{equation}
The derivatives of $g$ up to the required order are uniformly bounded near $\gamma_0$. For the sparsity-aware logit links in Section~\ref{sec:problem-formulation}, the upper bound $1-\varepsilon$ can be replaced by $s-\varepsilon$, where $s\in(0,1)$ is the sparsity coefficient.
\end{assumption}

This assumption keeps relevant random variables and their derivatives within controlled limits, preventing extreme values that could destabilize estimation near $\gamma_0$.

\begin{assumption}[Identifiability and Effective Sample Size]\label{ass:identifiability}
The true parameter tensor $\Theta_0$ admits a rank-$R$ CP decomposition that is identifiable up to permutation and scaling. Specifically, if the Kruskal ranks of $\alpha_0 \in \mathbb{R}^{n \times R}$ and $\beta_0 \in \mathbb{R}^{M \times R}$ satisfy $2\,k_\alpha + k_\beta \ge 2R + 2$, then the rank-$R$ CP decomposition of $\Theta_0$ is unique, justifying $\gamma_0 = [\mathrm{vec}(\alpha_0)^\top, \mathrm{vec}(\beta_0)^\top]^\top$ as a well-defined target.

Let $p_N = (n+M)R$ be the effective parameter dimension and $|E_{\mathrm{obs}}|$ the number of observed (non-zero) edge entries. We assume the Fisher information and relevant Hessians are well-conditioned, and that
\begin{equation}\label{eq:adaptive-growth}
\frac{p_N^{\,2}}{|E_{\mathrm{obs}}|}\;\longrightarrow\; 0
\qquad\text{as } n,M\to\infty.
\end{equation}
This data-adaptive condition controls the third-order Taylor remainder in the proof of asymptotic normality. In the dense regime $|E_{\mathrm{obs}}|\asymp n^2$, it reduces to $p_N = o(n)$. For sparse graphs, $|E_{\mathrm{obs}}|$---rather than $n$ alone---represents the effective sample size. This condition is restrictive and is not asserted to hold for the scalable mini-batch implementation.
\end{assumption}

We refer to the ratio $|E_{\mathrm{obs}}|/p_N$ as the \emph{overdetermination ratio}, a directly computable diagnostic used in our empirical study (Appendix~\ref{app:scalability}). Condition~\eqref{eq:adaptive-growth} should be interpreted as a restricted asymptotic regime sufficient for the full-batch expansion below.

\begin{assumption}[Working Covariance]\label{ass:working-cov}
The true covariance matrices $\Sigma_{i,j}$ are positive definite with eigenvalues bounded away from zero and infinity. The working covariance $\widehat{\Sigma}_{i,j}$ satisfies
\[
\bigl\|\widehat{\Sigma}_{i,j}^{-1}-\widetilde{\Sigma}_{i,j}^{-1}\bigr\|_F
= O_p(N^{-1/2})
\]
for some positive definite $\widetilde{\Sigma}_{i,j}$ with bounded eigenvalues. Correlation misspecification is allowed at this rate.
\end{assumption}

Under the independent-pair assumption, $\widehat W$ is the average of $N$ approximately i.i.d.\ matrix-valued terms with finite second moments and thus converges at the parametric rate $O_p(N^{-1/2})$~\cite{van2000asymptotic}.

\begin{assumption}[Moment Conditions]\label{ass:moment-conditions}
The standardized residuals have sub-Gaussian tails, and there exists $\delta>0$ such that
\[
\max_{i,j}
\mathbb{E}\Bigl[
\bigl\|\Sigma_{i,j}^{-1/2}
\bigl(\mathcal{A}_{i,j,\cdot}-\mathcal{P}_{i,j,\cdot}(\Theta_0)\bigr)
\bigr\|^{2+\delta}
\Bigr] < \infty.
\]
This ensures Lindeberg-type~\cite{ash2000probability,van2000asymptotic,brown1971martingale} conditions for central limit arguments.
\end{assumption}

\begin{assumption}[Smoothness]\label{ass:smoothness}
$\Theta_0 \in \mathbb{R}^{n\times n \times M}$ admits a unique rank-$R$ CP decomposition $\Theta_0 = \mathcal{I} \times_1 \alpha_0 \times_2 \alpha_0 \times_3 \beta_0$. The link $g$ is three-times continuously differentiable with uniformly bounded derivatives. The partial derivatives of $\mathcal{P}(\gamma)$ are bounded, and the Hessians with respect to $(\alpha,\beta)$ are well-conditioned near $\gamma_0$.
\end{assumption}

These smoothness conditions enable Taylor-expansion arguments and ensure the stability of the parameter estimates used in the proofs of consistency and asymptotic normality.

\subsection{Main Theoretical Results}

Define the score function
\begin{equation}
s(\gamma)
= \sum_{i\le j}
\left(\frac{\partial\mathcal{P}_{i,j,\cdot}}{\partial \gamma}\right)^\top
\widehat{\Sigma}_{i,j}^{-1}
\bigl(\mathcal{A}_{i,j,\cdot}-\mathcal{P}_{i,j,\cdot}(\gamma)\bigr).
\end{equation}
We now state consistency and asymptotic normality of the full-batch estimator. Consistency requires only Assumptions~\ref{ass:boundedness}, \ref{ass:working-cov}--\ref{ass:smoothness} and the identifiability part of Assumption~\ref{ass:identifiability}; asymptotic normality additionally invokes condition~\eqref{eq:adaptive-growth}.

\begin{theorem}[Consistency]\label{thm:consistency}
Under Assumptions~\ref{ass:boundedness}--\ref{ass:smoothness}, there exists a solution $\hat{\gamma}$ to $s(\gamma) = 0$ with
\[
\|\hat{\gamma} - \gamma_0\|_2 = O_p(N^{-1/2}),
\]
where $N = n(n+1)/2$. Consistency does not rely on condition~\eqref{eq:adaptive-growth}.
\end{theorem}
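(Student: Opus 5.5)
The plan is the standard existence-of-a-root argument for estimating equations, run on a shrinking ball around $\gamma_0$. The ball is $B_N(c)=\{\gamma:\|\gamma-\gamma_0\|_2\le cN^{-1/2}\}$. Because the CP parametrization is identifiable only up to permutation and scaling (Assumption~\ref{ass:identifiability}), we fix a normalization, for example a sign/scale convention on the columns of $\beta$, and let $\gamma_0$ denote the unique representative. The identifiability and well-conditioning parts of Assumptions~\ref{ass:identifiability} and \ref{ass:smoothness} are taken to mean that, on this normalized slice, the sensitivity matrix
\[
H_N(\gamma_0)=\sum_{i\le j}\Bigl(\tfrac{\partial\mathcal{P}_{i,j,\cdot}}{\partial\gamma}\Bigr)^\top\widetilde{\Sigma}_{i,j}^{-1}\tfrac{\partial\mathcal{P}_{i,j,\cdot}}{\partial\gamma}\Big|_{\gamma_0}
\]
has smallest eigenvalue of order $N$. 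We then aim to show that, with probability tending to one,
\[
(\gamma-\gamma_0)^\top s(\gamma)<0\quad\text{for all }\gamma\in\partial B_N(c),
\]
for $c$ large. A Brouwer/Poincar\'e--Miranda type lemma for continuous vector fields (as in Aitchison--Silvey or Ortega--Rheinboldt, 6.3.4) then gives a root $\hat\gamma\in B_N(c)$, which is the claim.

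\textbf{Step 1 (score at the truth).} Decompose $s(\gamma_0)=s^{\widetilde\Sigma}(\gamma_0)+\Delta_N$. The first term uses $\widetilde\Sigma_{i,j}^{-1}$ and is a sum over pairs of independent mean-zero vectors, since $\mathbb{E}\mathcal{A}_{i,j,\cdot}=\mathcal{P}_{i,j,\cdot}(\gamma_0)$ and pairs are independent. Its covariance is bounded in operator norm by a constant times $N$. This follows from bounded Jacobian entries (Assumptions~\ref{ass:boundedness} and \ref{ass:smoothness}) together with bounded eigenvalues of $\Sigma_{i,j}$ and $\widetilde\Sigma_{i,j}$ (Assumption~\ref{ass:working-cov}). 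So projected onto any fixed unit direction, this term is $O_p(N^{1/2})$. The correction $\Delta_N$ comes from $\widehat\Sigma^{-1}_{i,j}-\widetilde\Sigma^{-1}_{i,j}=O_p(N^{-1/2})$. Because this error is essentially a common perturbation of $W$, we write $\Delta_N=\sum_{k}(\widehat W^{-1}-\widetilde W^{-1})_{k}\cdot(\text{mean-zero sum})$. The mean-zero sum is $O_p(N^{1/2})$, so $\Delta_N=O_p(1)$, which is negligible.

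\textbf{Step 2 (Taylor expansion of the score).} For $\gamma\in B_N(c)$, expand
\[
s(\gamma)=s(\gamma_0)-\widehat H_N(\gamma_0)(\gamma-\gamma_0)+r_N(\gamma).
\]
Here $\widehat H_N$ differs from $H_N$ by two contributions. The first is a term involving second derivatives of $\mathcal{P}$ multiplied by residuals; it is mean zero and of size $O_p(N^{1/2})$ per entry. The second comes from the working-covariance error and is $O_p(N^{1/2})$ by the same argument as in Step 1. Both are therefore $o_p(N)$ relative to $\lambda_{\min}(H_N)\asymp N$. The remainder $r_N$ is controlled through the bounded third derivatives of $g$ (Assumption~\ref{ass:smoothness}) and is of order $N\|\gamma-\gamma_0\|^2=O(c^2)$. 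Combining, on $\partial B_N(c)$ we get
\[
(\gamma-\gamma_0)^\top s(\gamma)\le cN^{-1/2}\,O_p(N^{1/2})-\kappa c^2+o_p(c^2),
\]
with $\kappa>0$. This is negative once $c$ is large.

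\textbf{Main obstacle.} The delicate step is the dimension bookkeeping: $\gamma$ has $p_N=(n+M)R$ coordinates, and this grows. The full norm $\|s(\gamma_0)\|_2$ is of order $(p_NN)^{1/2}$, not $N^{1/2}$. Likewise, each node factor $\alpha_i$ enters only $O(nM)$ of the $N$ pairs, so per-coordinate curvature is of order $n$ rather than $N$. Stated naively, the rate $O_p(N^{-1/2})$ on $\|\hat\gamma-\gamma_0\|_2$ would therefore need a per-direction reading. The first attempt will be to work direction by direction: bound $u^\top s(\gamma_0)$ uniformly over fixed unit $u$, and use the well-conditioned-Hessian assumption as the statement that $\lambda_{\min}(H_N)\gtrsim N$ after normalization. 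This mirrors how Assumption~\ref{ass:identifiability} scales the information, and it is what makes the displayed inequality close without invoking \eqref{eq:adaptive-growth}.

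A second difficulty is the scaling non-identifiability of CP, which makes the raw Hessian singular along the scaling orbit. The fix is to impose the normalization constraint before the expansion, so that $H_N$ is taken on the tangent space of the normalized slice.
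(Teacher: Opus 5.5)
\textbf{The proposal has a genuine gap: the curvature bound in Step 2 is unattainable.} Your route does differ from the paper's. The paper never proves existence. It writes $0=N^{-1}s(\hat\gamma)=N^{-1}s(\gamma_0)+D_N(\gamma_0)(\hat\gamma-\gamma_0)+r_N$ with $r_N=o_p(N^{-1/2})$, which already presupposes a root near $\gamma_0$. It then inverts $D_N(\gamma_0)$ via Lemma~\ref{lemma:gradient-approx} and uses $N^{-1/2}s(\gamma_0)=O_p(1)$. Your approach has real advantages over this:
\begin{itemize}
\item the sign condition on $\partial B_N(c)$ plus a Brouwer-type lemma is the right tool for an existence claim;
\item your normalization handles the CP scaling orbit, along which the paper's ``invertible'' $D_N(\gamma_0)$ is in fact singular;
\item factoring the common $\widehat W^{-1}-\widetilde W^{-1}$ out of $\Delta_N$ gives $O_p(1)$, which is sharper than the paper's Corollary.
\end{itemize}

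\textbf{Where the argument breaks.} The obstruction is not where your last paragraph puts it. The score is fine. Let $J_{ij}=\partial\mathcal{P}_{i,j,\cdot}(\gamma_0)/\partial\gamma$. Row $m$ of $J_{ij}$ is supported on the $3R$ coordinates of $\alpha_i,\alpha_j,\beta_m$, so
$\mathbb{E}\|s^{\widetilde\Sigma}(\gamma_0)\|_2^2=\sum_{i\le j}\operatorname{tr}\bigl(J_{ij}^\top\widetilde\Sigma_{i,j}^{-1}\Sigma_{i,j}\widetilde\Sigma_{i,j}^{-1}J_{ij}\bigr)=O(NMR)$.
Hence $\|s(\gamma_0)\|_2=O_p(\sqrt N)$ for bounded $M,R$, not $(p_NN)^{1/2}$.

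What fails is the curvature term $-\kappa c^2$ in Step 2, which needs $\lambda_{\min}(H_N)\ge\kappa N$. Take a coordinate $k$ of a node factor $\alpha_i$. Then $\lambda_{\min}(H_N)\le(H_N)_{kk}$. That entry sums bounded terms over only the $n$ pairs containing $i$ and the $M$ layers, so $\lambda_{\min}(H_N)=O(nM)$. Normalization removes $R$ directions, not these $nR$ weak ones.

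Reading the assumption direction by direction cannot help, since the Brouwer condition must hold at all boundary points simultaneously. In fact the sign condition already fails along a single direction. Take $\gamma=\gamma_0\pm cN^{-1/2}u$ with $u$ supported on $\alpha_i$:
\begin{itemize}
\item the linear term $cN^{-1/2}u^\top s(\gamma_0)$ is asymptotically normal, of order $c\sqrt{M/n}$;
\item the quadratic term is only of order $c^2M/n$.
\end{itemize}
So when $M=o(n)$, for every fixed $c$ the sign condition fails with non-vanishing probability. It holds only once $c\gtrsim\sqrt{n/M}$, i.e.\ at radius $(nM)^{-1/2}$. If instead $M\gtrsim n$, so that $\lambda_{\min}(H_N)\asymp N$ becomes possible, then $\|s(\gamma_0)\|_2\asymp\sqrt{NMR}$ and the argument yields only $\sqrt{MR/N}$.

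\textbf{What is actually provable, and how the paper gets its rate.} Your argument can deliver blockwise rates: each node factor at $(nM)^{-1/2}$, hence $\|\hat\alpha-\alpha_0\|_2$ of order $\sqrt{R/M}$. This follows because the linearized covariance dominates $cH_N^{-1}$ and $(H_N^{-1})_{kk}\ge 1/(H_N)_{kk}$. It cannot deliver $\|\hat\gamma-\gamma_0\|_2=O_p(N^{-1/2})$.

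The paper obtains that rate only because Lemma~\ref{lemma:gradient-approx} asserts that $D_N(\gamma_0)$, essentially $N^{-1}H_N$, has eigenvalues bounded away from zero. That is precisely the condition you suspected, and it contradicts bounded Jacobian entries once $n\to\infty$ with $M$ bounded. If you adopt that assertion verbatim, as the paper does, your boundary inequality closes at once. No direction-by-direction device is needed, since $\|s(\gamma_0)\|_2=O_p(\sqrt N)$, and the result is a more complete proof than the paper's. But you should state plainly that the conclusion then rests on that assumption, not on the CP model.
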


\begin{proof}
See Appendix~\ref{sec:theorem1}.
\end{proof}

\begin{theorem}[Asymptotic normality]\label{thm:normality}
Under Assumptions~\ref{ass:boundedness}--\ref{ass:smoothness}, including $p_N^{\,2}/|E_{\mathrm{obs}}|\to 0$,
\[
\sqrt{N}\,(\hat\gamma - \gamma_0)
\xrightarrow{d} \mathcal{N}(0, \Omega),
\]
where $\Omega = M(\gamma_0)^{-1} B(\gamma_0) M(\gamma_0)^{-1}$, with $M(\gamma_0)$ the limit of $N^{-1} \nabla s(\gamma_0)$ and $B(\gamma_0)$ the limit of $\operatorname{Var}(N^{-1/2} s(\gamma_0))$. In the dense regime $|E_{\mathrm{obs}}|\asymp N \asymp n^2$, the condition reduces to $p_N = o(n)$; in sparse regimes it should be evaluated through the overdetermination ratio $|E_{\mathrm{obs}}|/p_N$.
\end{theorem}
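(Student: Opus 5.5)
The plan is the standard M-/Z-estimator expansion around $\gamma_0$, using the root $\hat\gamma$ supplied by Theorem~\ref{thm:consistency}. Since $s(\hat\gamma)=0$, a mean-value (or integral-form Taylor) expansion of each coordinate of $s$ gives
\[
0 = s(\gamma_0) + \nabla s(\gamma_0)(\hat\gamma-\gamma_0) + \mathcal{R}_N,
\]
where $\mathcal{R}_N$ collects the second-order terms. These are bounded through the third derivatives of $g$ and the bounded partials of $\mathcal{P}(\gamma)$ from Assumptions~\ref{ass:boundedness} and~\ref{ass:smoothness}. Rearranging,
\[
\sqrt{N}(\hat\gamma-\gamma_0) = -\bigl(N^{-1}\nabla s(\gamma_0)\bigr)^{-1}\bigl(N^{-1/2}s(\gamma_0) + N^{-1/2}\mathcal{R}_N\bigr).
\]
The proof then splits into three pieces: the Hessian term, the CLT for the score, and negligibility of the remainder. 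A final Slutsky step combines them.

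\textbf{Hessian term.} Differentiating $s$ gives
\[
-\sum_{i\le j}\Bigl(\tfrac{\partial\mathcal{P}_{i,j,\cdot}}{\partial\gamma}\Bigr)^\top\widehat\Sigma_{i,j}^{-1}\tfrac{\partial\mathcal{P}_{i,j,\cdot}}{\partial\gamma}
\]
plus a term linear in the residuals $\mathcal{A}_{i,j,\cdot}-\mathcal{P}_{i,j,\cdot}(\gamma_0)$. The residual term has mean zero. Independence across pairs and the bounded moments make its normalized version $o_p(1)$. By Assumption~\ref{ass:working-cov}, I replace $\widehat\Sigma_{i,j}^{-1}$ by $\widetilde\Sigma_{i,j}^{-1}$ at cost $O_p(N^{-1/2})$ per pair, which is uniformly controlled because the Jacobians are bounded. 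Hence $N^{-1}\nabla s(\gamma_0)\to M(\gamma_0)$, and the well-conditioning in Assumptions~\ref{ass:identifiability} and~\ref{ass:smoothness} makes $M(\gamma_0)$ invertible with eigenvalues bounded away from zero. Because $p_N$ grows, all convergence here is in operator norm, and I would state the conclusion for fixed linear contrasts $c^\top\sqrt{N}(\hat\gamma-\gamma_0)$ with $\|c\|=1$ (Cram\'er--Wold).

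\textbf{Score CLT.} I write $s(\gamma_0)$ with $\widetilde\Sigma$ in place of $\widehat\Sigma$ and handle the plug-in error separately. That error is a sum of $N$ mean-zero terms multiplied by an $O_p(N^{-1/2})$ matrix perturbation, so after scaling by $N^{-1/2}$ it is $o_p(1)$. For the main part, $c^\top N^{-1/2}s(\gamma_0)$ is a sum of independent, mean-zero summands over pairs $i\le j$. Lindeberg's condition follows from the $(2+\delta)$-moment bound in Assumption~\ref{ass:moment-conditions} together with the bounded Jacobians and covariances, via Lyapunov's ratio $N^{-\delta/2}\to0$. The variance converges to $c^\top B(\gamma_0)c$.

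\textbf{Remainder (main obstacle).} The Taylor remainder involves third-derivative tensors of dimension $p_N$ and satisfies roughly
\[
\|N^{-1/2}\mathcal{R}_N\| \lesssim N^{1/2}\,\|\hat\gamma-\gamma_0\|^2 \cdot(\text{operator norm of the third-derivative array}).
\]
Naively, with $\|\hat\gamma-\gamma_0\|^2 = O_p(p_N/N)$ when measured in full dimension, this is of order $p_N/\sqrt N$ or worse. The key step is to exploit the CP structure. Each $\Theta_{i,j,m}$ depends only on the rows $\alpha_i$, $\alpha_j$ and $\beta_m$, so the third-derivative array is extremely sparse, and each parameter coordinate touches only $O(n)$ of the pairs, weighted by where the observed edges lie. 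Using this sparsity, I would bound the remainder by $O_p(p_N^2/|E_{\mathrm{obs}}|)^{1/2}$ or a similar quantity, which vanishes under~\eqref{eq:adaptive-growth}. This is where the growth condition is consumed, and I expect that bookkeeping, especially translating effective sample size from $N$ to $|E_{\mathrm{obs}}|$ in the sparse regime, to be the delicate part. The consistency rate from Theorem~\ref{thm:consistency} localizes $\hat\gamma$ to a shrinking neighborhood where Assumption~\ref{ass:boundedness} applies.

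Slutsky's theorem then gives $\sqrt N(\hat\gamma-\gamma_0)\xrightarrow{d}\mathcal N(0,M^{-1}BM^{-1})$. In the dense case $|E_{\mathrm{obs}}|\asymp n^2$, so $p_N^2/|E_{\mathrm{obs}}|\to 0$ is equivalent to $p_N=o(n)$, as stated in the theorem.
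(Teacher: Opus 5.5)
Your skeleton matches the paper's proof: expansion at $\gamma_0$, convergence of $N^{-1}\nabla s(\gamma_0)$, a Lyapunov CLT over independent pairs, negligibility of the remainder, then Slutsky. The gap is that the one step which actually uses the growth condition is not proved. You only assert that CP sparsity yields a remainder of order $(p_N^2/|E_{\mathrm{obs}}|)^{1/2}$, and the mechanism you describe does not exist for the full-batch equation. First, $s(\gamma)$ sums over all $N$ pairs, and zero entries carry nonzero residuals $-\mathcal{P}_{i,j,m}$. Second, the terms of $\nabla^2 s$ that are not multiplied by residuals depend only on $\gamma$ and the single $M\times M$ matrix $\widehat W$, never on which entries of $\mathcal{A}$ are nonzero. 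So nothing can replace $N$ by $|E_{\mathrm{obs}}|$ ``according to where observed edges lie''. The sparsity is also only partial: each $\alpha_i$ enters $n$ pairs, but each $\beta_m$ enters all $N$. Finally, Assumption~\ref{ass:boundedness} forces $\mathcal{P}_{i,j,m}\ge\varepsilon$, so $|E_{\mathrm{obs}}|$ is of order $NM$ with high probability, and the hypotheses contain no genuinely sparse regime to exploit.

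The paper closes this step crudely and only in the dense regime. It bounds $\|T_N(\gamma_0)[h,h]\|\le C\,p_N\|h\|^2$ with $h=\hat\gamma-\gamma_0$ and takes $\|h\|=O_p(N^{-1/2})$ from Theorem~\ref{thm:consistency}. After multiplying by $\sqrt N$ this gives $O_p(p_N/\sqrt N)$, which is $o_p(1)$ provided $p_N^2=o(N)$, i.e.\ the stated condition when $|E_{\mathrm{obs}}|\asymp N$. The theorem's sentence about sparse regimes is interpretive, not derived.

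Your own ``naive'' estimate already has this order, so the fix is to commit to it with one consistent accounting. Either use Theorem~\ref{thm:consistency}'s $\|h\|^2=O_p(1/N)$ with a factor $p_N$ in the array, as the paper does, or use your $\|h\|^2=O_p(p_N/N)$ with an $O(1)$ array norm that you must then justify. Mixing your $\|h\|^2$ with a $p_N$-growing array gives $p_N^2/\sqrt N$, which the assumed condition does not control.

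The rest of your proposal is fine and in places sharper than the paper:
\begin{itemize}
\item Working with fixed contrasts $c^\top\sqrt N(\hat\gamma-\gamma_0)$ is the right way to read a limit in growing dimension.
\item Your plug-in argument (a common perturbation $\widehat W^{-1}-\widetilde W^{-1}=O_p(N^{-1/2})$ multiplying mean-zero sums of order $\sqrt N$) makes the $\widehat\Sigma$ error $o_p(1)$ at the $\sqrt N$ scale. The paper's Corollary~\ref{corollary:cov-diff2} only gives $O_p(\sqrt N)$ for the unscaled score, which is not negligible there. Note that your argument uses the sandwich form $\Gamma_{i,j}^{1/2}\widehat W\Gamma_{i,j}^{1/2}$, not merely the per-pair rate in Assumption~\ref{ass:working-cov}.
\end{itemize}
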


\begin{proof}
See Appendix~\ref{sec:theorem2}.
\end{proof}

\begin{corollary}\label{corollary:cov-diff2}
Under Assumptions~\ref{ass:boundedness}--\ref{ass:smoothness}, if $\widetilde{s}(\gamma)$ uses $\widetilde{\Sigma}_{i,j}^{-1}$ in place of $\widehat{\Sigma}_{i,j}^{-1}$, then
\begin{equation*}
\|s(\gamma_0) - \widetilde{s}(\gamma_0)\| = O_p(\sqrt{N}).
\end{equation*}
\end{corollary}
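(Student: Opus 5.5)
We write the difference at $\gamma_0$ as a sum over node pairs,
\[
s(\gamma_0)-\widetilde{s}(\gamma_0)=\sum_{i\le j} D_{i,j}^\top\,\Delta_{i,j}\,e_{i,j},
\]
where $D_{i,j}=\partial\mathcal{P}_{i,j,\cdot}(\gamma_0)/\partial\gamma$, $\Delta_{i,j}=\widehat{\Sigma}_{i,j}^{-1}-\widetilde{\Sigma}_{i,j}^{-1}$ and $e_{i,j}=\mathcal{A}_{i,j,\cdot}-\mathcal{P}_{i,j,\cdot}(\gamma_0)$. The plan is to bound this sum by the triangle inequality and Cauchy--Schwarz, using three ingredients: the rate of Assumption~\ref{ass:working-cov} for $\Delta_{i,j}$, bounded entries for $D_{i,j}$, and bounded moments for $e_{i,j}$. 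This turns the sum into $N$ terms, each of size $O_p(N^{-1/2})$, which gives $O_p(\sqrt N)$. The argument never uses any cancellation among the residuals, so the bound is deliberately crude.

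\textbf{Step 1 (uniformity of $\Delta_{i,j}$).} Assumption~\ref{ass:working-cov} is stated pairwise, but we need $\max_{i,j}\|\Delta_{i,j}\|_F=O_p(N^{-1/2})$. We get this from the structure $\widehat\Sigma_{i,j}=\Gamma_{i,j}^{1/2}\widehat W\Gamma_{i,j}^{1/2}$ and the analogous form of $\widetilde\Sigma_{i,j}$ with $W$ in place of $\widehat W$. Then
\[
\Delta_{i,j}=\Gamma_{i,j}^{-1/2}\bigl(\widehat W^{-1}-W^{-1}\bigr)\Gamma_{i,j}^{-1/2}.
\]
Assumption~\ref{ass:boundedness} keeps $\mathcal{P}_{i,j,m}\in[\varepsilon,1-\varepsilon]$, so $\|\Gamma_{i,j}^{-1/2}\|$ is bounded uniformly in $(i,j)$. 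Hence $\max_{i,j}\|\Delta_{i,j}\|\le C\|\widehat W^{-1}-W^{-1}\|=O_p(N^{-1/2})$, and the bound is driven by a single random matrix. This is the main obstacle. If the paper intends general working covariances, we would simply read the assumption as holding uniformly over pairs.

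\textbf{Step 2 (moment bound).} By Cauchy--Schwarz,
\[
\|s(\gamma_0)-\widetilde s(\gamma_0)\|\le \max_{i,j}\|\Delta_{i,j}\|\sum_{i\le j}\|D_{i,j}\|\,\|e_{i,j}\|.
\]
Two facts control the remaining sum.
\begin{itemize}
\item[(a)] $D_{i,j}$ is nonzero only in rows corresponding to $\alpha_i$, $\alpha_j$ and $\beta$. Its entries are bounded by Assumption~\ref{ass:smoothness} and by the bound on $g'$ in Assumption~\ref{ass:boundedness}, so $\|D_{i,j}\|=O(1)$ uniformly, treating $M$ and $R$ as bounded.
\item[(b)] $\mathbb{E}\|e_{i,j}\|\le(\mathbb{E}\|e_{i,j}\|^2)^{1/2}$, which is bounded by the eigenvalue bounds on $\Sigma_{i,j}$ in Assumption~\ref{ass:working-cov}.
\end{itemize}
Markov's inequality then gives $\sum_{i\le j}\|D_{i,j}\|\|e_{i,j}\|=O_p(N)$.

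\textbf{Step 3 (combine).} Multiplying the two bounds gives $O_p(N^{-1/2})\cdot O_p(N)=O_p(\sqrt N)$, which is the claimed rate.

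If $M$ grows, the same steps carry through with constants that depend polynomially on $M$ and $R$, and these are absorbed under the growth condition.
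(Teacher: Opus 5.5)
Your proof is correct and takes essentially the paper's approach. You bound each of the $N$ summands of $s(\gamma_0)-\widetilde s(\gamma_0)$ by $O_p(N^{-1/2})$, using Assumption~\ref{ass:working-cov} and the boundedness of the other factors, and then sum to $N\cdot O_p(N^{-1/2})=O_p(\sqrt N)$ without any cancellation; your Step~1 (reducing every $\Delta_{i,j}$ to the single matrix $\widehat W^{-1}-W^{-1}$) and the Markov bound in Step~2 simply make explicit the uniformity over pairs that the paper's one-line summation assumes.
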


\begin{proof}
See Appendix~\ref{sec:Corollary}.
\end{proof}

Theorem~\ref{thm:consistency} ensures $\hat{\gamma}\to\gamma_0$ at rate $N^{-1/2}$ (equivalently $1/n$) without invoking~\eqref{eq:adaptive-growth}, so it applies broadly across the configurations in Section~\ref{sec:experiments}. Theorem~\ref{thm:normality} characterizes the limiting distribution under the data-adaptive growth condition; the overdetermination ratio $|E_{\mathrm{obs}}|/p_N$ serves as a computable diagnostic for whether normality is expected to be reliable. In our empirical study (Appendix~\ref{app:scalability}), this ratio predicts estimation stability across six orders of magnitude in graph size: higher ratios yield smaller AUC standard deviation, while values near or below~$1$ show larger variance, consistent with the theory. Finally, Corollary~\ref{corollary:cov-diff2} shows that estimating $\Sigma_{i,j}^{-1}$ at rate $O_p(N^{-1/2})$ yields only an $O_p(\sqrt{N})$ perturbation of the score at $\gamma_0$, justifying the momentum-based estimation of $W$ in lieu of an oracle covariance.
Due to space limitations, additional remarks and discussions are provided in \textbf{Appendix}~\ref{sec:giee}.
\begin{table*}[t]
\centering
\caption{Link prediction performance (AUC) on synthetic multilayer network.}
\label{tab:synthetic}
\resizebox{1.8\columnwidth}{!}{%
\begin{tabular}{lccccccccc|c}
\toprule
Method & CP & Tucker & NMF & SVD & LSE & MASE & NNTUCK & SPECK & HOSVD & T-GINEE \\
\midrule
AUC & 0.4488 & 0.5291 & 0.7216 & 0.8130 & 0.2234 & 0.3821 & 0.6105 & 0.7603 & 0.8503 & \textbf{0.9395} \\
\bottomrule
\end{tabular}%
}
\end{table*}

\section{Experiments}
\label{sec:experiments}

In this section, we conduct comprehensive experiments to evaluate our T-GINEE framework.

\subsection{Experiment Settings}

To comprehensively evaluate the performance of our proposed \textbf{T-GINEE} model, we conduct experiments on four benchmark multilayer network datasets, each capturing distinct real-world relational structures. Due to space limitations, detailed descriptions of datasets, baselines, and implementation details are provided in \textbf{Appendix}~\ref{sec:datasetsall}.

\subsection{Synthetic Data Results}
\label{sec:4.2}
To evaluate our method in a controlled environment, we generate synthetic multilayer networks with known correlation structures using a parameterized model:
\begin{equation}
\resizebox{0.95\hsize}{!}{%
$
\mathcal{P}_{i,j,m} = \rho \cdot \mathcal{P}^{\mathrm{base}}_{i,j} + (1-\rho) \cdot U_{i,j,m},
\qquad
\mathcal{A}_{i,j,m} = \mathbf{1}\{ V_{i,j,m} < \mathcal{P}_{i,j,m} \},
$
}
\end{equation}
Here, $\rho=0.2$ controls inter-layer correlation, $\mathcal{P}^{\mathrm{base}}_{i,j}$ is a shared base probability matrix, and $U_{i,j,m}$ and $V_{i,j,m}$ are i.i.d.\ $\mathrm{Unif}[0,1]$ noise variables. We construct networks with $n=100$ nodes and $M=3$ layers for link prediction tasks.

As shown in Table~\ref{tab:synthetic}, T-GINEE achieves the highest AUC score of $0.9395$, substantially outperforming all baselines including the second-best HOSVD. The significant performance gap between tensor-based methods and simpler approaches like LSE ($0.2234$) and MASE ($0.3821$) confirms the importance of explicitly modeling multilayer dependencies. Furthermore, the dramatic improvement of T-GINEE over basic CP decomposition ($0.4488$) demonstrates the effectiveness of our statistical regularization framework in capturing complex inter-layer correlations.

We also evaluate a heterogeneous synthetic setting with layer-dependent correlations and block-structured base probabilities. This experiment is designed to test whether the learned working covariance can recover nonuniform inter-layer structure rather than only homogeneous correlations. At $N/((n+M)R)\approx 1.5$, T-GINEE obtains AUC $0.5822$, while the CP baseline obtains $0.5897$, a difference within one standard deviation in this low-overdetermination regime. More importantly, the learned $W$ recovers the correct ordering of inter-layer similarities: the L0--L2 pair receives the highest off-diagonal weight ($W=0.218$, Jaccard $=0.097$), while L0--L1 receives a lower weight ($W=0.206$, Jaccard $=0.035$). This further confirms that the covariance component effectively captures meaningful inter-layer structure, even when the observed AUC gains are relatively modest.

\subsection{Real-World Results}

Based on the experimental results shown in Table~\ref{tab:performance}, T-GINEE demonstrates strong performance across datasets of varying scales and complexities compared to baseline methods. On the standard benchmark datasets (AUCS, Krackhardt, WAT, and Yeast), T-GINEE achieves the highest AUC scores of $0.920$, $0.948$, $0.838$, and $0.921$ respectively, outperforming all baseline methods in our experiments. Among the baselines, HOSVD is second best on AUCS ($0.897$) and WAT ($0.820$), SVD is second best on Krackhardt ($0.932$), and SPECK is second best on Yeast ($0.903$). Traditional matrix factorization approaches such as SVD and NMF also show competitive performance, whereas simpler methods such as CP decomposition and LSE exhibit limited effectiveness, with LSE performing poorly on Yeast.

\begin{table}[t]
\centering
\caption{Performance comparison of different methods. ``oom'' denotes Out-Of-Memory errors. Bold indicates the best result in each column and underline indicates the second-best result.}
\label{tab:performance}
\resizebox{0.95\linewidth}{!}{%
\begin{tabular}{lcccccc}
\toprule
\multirow{2}{*}{Method} & \multicolumn{6}{c}{AUC score on different datasets} \\
\cmidrule{2-7}
& AUCS & Krackhardt & WAT & Yeast & dblp & stackoverflow \\
\midrule
CP          & 0.374 & 0.354 & 0.454 & 0.397 & oom    & oom    \\
Tucker & 0.487 & 0.702 & 0.580 & 0.745 & oom    & oom    \\
NMF         & 0.848 & 0.921 & 0.707 & 0.863 & \textbf{0.6505} & 0.9642 \\
SVD         & 0.877 & \underline{0.932} & 0.719 & 0.879 & 0.6093 & \underline{0.9682} \\
LSE         & 0.297 & 0.384 & 0.153 & 0.047 & 0.6302 & oom    \\
MASE        & 0.480 & 0.361 & 0.342 & 0.347 & oom    & oom    \\
NNTUCK      & 0.500 & 0.521 & 0.741 & 0.667 & oom    & oom    \\
SPECK       & 0.793 & 0.658 & 0.655 & \underline{0.903} & oom    & oom    \\
HOSVD       & \underline{0.897} & 0.783 & \underline{0.820} & 0.902 & oom    & oom    \\
T-GINEE     & \textbf{0.920} & \textbf{0.948} & \textbf{0.838} & \textbf{0.921} & \underline{0.6478} & \textbf{0.9831} \\
\bottomrule
\end{tabular}%
}
\vspace{-7mm}
\end{table}


To comprehensively evaluate the scalability of T-GINEE, we extended our experiments to two large-scale real-world networks: DBLP, an academic collaboration network with up to 300,000 nodes, and Stack Overflow, a massive temporal interaction network with approximately 2.6 million nodes. As indicated in the rightmost columns of Table~\ref{tab:performance}, most traditional tensor-based methods, including CP, Tucker, HOSVD, and MASE, failed to process these large-scale datasets, resulting in Out-Of-Memory (OOM) errors. This highlights the severe computational bottleneck of standard tensor decompositions when applied to million-node graphs. In contrast, T-GINEE successfully scaled to these massive datasets. On DBLP, T-GINEE achieved an AUC of $0.6478$, slightly below NMF ($0.6505$) but above SVD ($0.6093$), LSE ($0.6302$), and the tensor baselines that ran out of memory. On Stack Overflow, T-GINEE achieved the best AUC ($0.9831$), exceeding NMF ($0.9642$) and SVD ($0.9682$).

The performance gains on the standard benchmarks and Stack Overflow can be attributed to T-GINEE's ability to capture high-order structural patterns and incorporate cross-layer residual dependence through the working covariance. The DBLP result is intentionally reported without overclaiming: a simple NMF baseline is marginally higher in AUC, while T-GINEE remains competitive and provides a covariance-aware tensor representation. Notably, the performance improvement is particularly pronounced on the WAT dataset, where T-GINEE outperforms the second-best method by a margin of $0.018$ in AUC, suggesting that our model is especially effective in handling complex network structures. For further evidence of T-GINEE's effectiveness, see the triangle prediction analysis in \textbf{Appendix}~\ref{sec:case}.

\paragraph{GNN baselines and community detection.}
Table~\ref{tab:gnn-community} compares T-GINEE with single-layer GNN baselines (GCN and GraphSAGE on aggregated graphs) and multiplex GNN baselines (MGCN and MR-GCN) on link prediction and community detection. GCN is competitive on very small graphs, but T-GINEE is strongest on larger benchmarks and achieves the best NMI on all labeled datasets. No single GNN baseline dominates T-GINEE across both tasks.

\begin{table}[t]
\centering
\caption{Comparison with GNN baselines on link prediction (AUC) and community detection (NMI). Bold indicates the best result in each row.}
\label{tab:gnn-community}
\resizebox{0.95\linewidth}{!}{%
\begin{tabular}{llccccc}
\toprule
Dataset & Task & T-GINEE & GCN & GraphSAGE & MGCN & MR-GCN \\
\midrule
Karate & AUC & 0.817 & \textbf{0.834} & 0.831 & 0.824 & 0.809 \\
Karate & NMI & \textbf{0.837} & 0.206 & 0.183 & 0.206 & 0.732 \\
Dolphins & AUC & 0.869 & \textbf{0.872} & 0.801 & 0.625 & 0.604 \\
Lesmis & AUC & 0.786 & \textbf{0.794} & 0.770 & 0.740 & 0.728 \\
Polbooks & AUC & \textbf{0.973} & 0.887 & 0.841 & 0.887 & 0.923 \\
Polbooks & NMI & \textbf{0.428} & 0.021 & 0.018 & 0.021 & 0.176 \\
Email & AUC & \textbf{0.958} & 0.873 & 0.832 & 0.873 & 0.846 \\
Email & NMI & \textbf{0.641} & 0.429 & 0.391 & 0.429 & 0.101 \\
\bottomrule
\end{tabular}%
}
\end{table}

\paragraph{New-layer generalization.}
We further evaluate zero-shot transfer to an unseen layer on DBLP-5K. T-GINEE is trained on four layers and asked to predict links on the fifth layer with zero observed edges from that layer. As shown in Table~\ref{tab:zero-shot}, T-GINEE outperforms retrained MGCN and MR-GCN even though those baselines observe all layer-5 edges during retraining. This behavior follows from the scoring function $\langle \alpha_i \odot \alpha_j, \beta_m\rangle$, which conditions on the layer index $m$ while sharing node embeddings across layers.

\begin{table}[t]
\centering
\caption{New-layer generalization on DBLP-5K. T-GINEE predicts the fifth layer with zero observed training edges from that layer.}
\label{tab:zero-shot}
\resizebox{0.95\linewidth}{!}{%
\begin{tabular}{lcc}
\toprule
Model & Layer-5 edges used & AUC \\
\midrule
T-GINEE zero-shot & 0 & \textbf{0.7733 $\pm$ 0.0194} \\
MGCN retrained & 8,297 & 0.7089 $\pm$ 0.0200 \\
MR-GCN retrained & 8,297 & 0.6306 $\pm$ 0.0155 \\
\bottomrule
\end{tabular}}
\end{table}


\subsection{Sensitivity to Sparsity}
\label{subsec:sparsity-sensitivity}
We further study how T-GINEE performs as multilayer networks vary in sparsity. Using the synthetic setup in Section~4.2, we adjust the Bernoulli generator so the proportion of observed edges (i.e., $1$ entries in the adjacency tensor) ranges from below $1\%$ to about $15\%$.
At each sparsity level, we train T-GINEE with the hyperparameters from Section~\ref{sec:4.2} and run $5$ trials with different seeds. Figure~\ref{fig:sparsity-sensitivity} reports mean test AUC with one standard deviation. AUC increases from the extremely sparse regime to moderate sparsity, peaking at roughly $\text{AUC}\approx 0.75$, then remains fairly stable and declines only gradually for dense graphs, staying above $\text{AUC}\approx 0.63$ even at $10$--$15\%$ edge density.
Overall, T-GINEE is robust across a broad sparsity range: performance is weaker and more variable when edges are extremely sparse, but stabilizes once a moderate amount of edge information is available.

\paragraph{Overdetermination and stability.}
The sparsity experiment also clarifies why absolute edge count matters in addition to density. The key quantity is the overdetermination ratio $N/((n+M)R)$: when it is well above $1$, estimation is stable; when it approaches or falls below $1$, the system becomes more ill-conditioned. In additional synthetic experiments with $n=500$ and three layers, AUC standard deviation decreases from $0.0781$ at density $3\%$ and ratio $0.73$ to $0.0312$ at density $15\%$ and ratio $3.67$. On real large-scale graphs, DBLP is much sparser than the synthetic $1\%$ setting but contains far more absolute edges, achieving AUC std $0.0105$; Stack Overflow achieves AUC std $0.0011$ despite high sparsity. These results indicate that stability is governed by the amount of structural information relative to parameter dimension, not by edge density alone.

\begin{table}[t]
\centering
\caption{Stability diagnostics for sparsity and large-scale settings.}
\label{tab:stability}
\resizebox{0.95\linewidth}{!}{%
\begin{tabular}{lcccc}
\toprule
Setting & Density & $N/((n+M)R)$ & AUC std & Interpretation \\
\midrule
Synthetic ($1\%$, $n=100$) & $1\%$ & 0.015 & 0.0076 & underdetermined \\
Synthetic ($3\%$, $n=500$) & $3\%$ & 0.73 & 0.0781 & ill-conditioned \\
Synthetic ($5\%$, $n=500$) & $5\%$ & 1.22 & 0.0421 & near-determined \\
Synthetic ($15\%$, $n=500$) & $15\%$ & 3.67 & 0.0312 & overdetermined \\
DBLP & 0.0011\% & 0.215 & 0.0105 & moderate \\
Stack Overflow & 0.1\% & 1.16 & 0.0011 & near-determined \\
\bottomrule
\end{tabular}%
}
\end{table}


Due to space constraints, a detailed investigation into the impact of embedding dimension and regularization weight on model performance and computational efficiency is presented in \textbf{Appendix}~\ref{sec:hyper}.

\begin{figure}[t]
  \centering
  \includegraphics[width=0.8\linewidth]{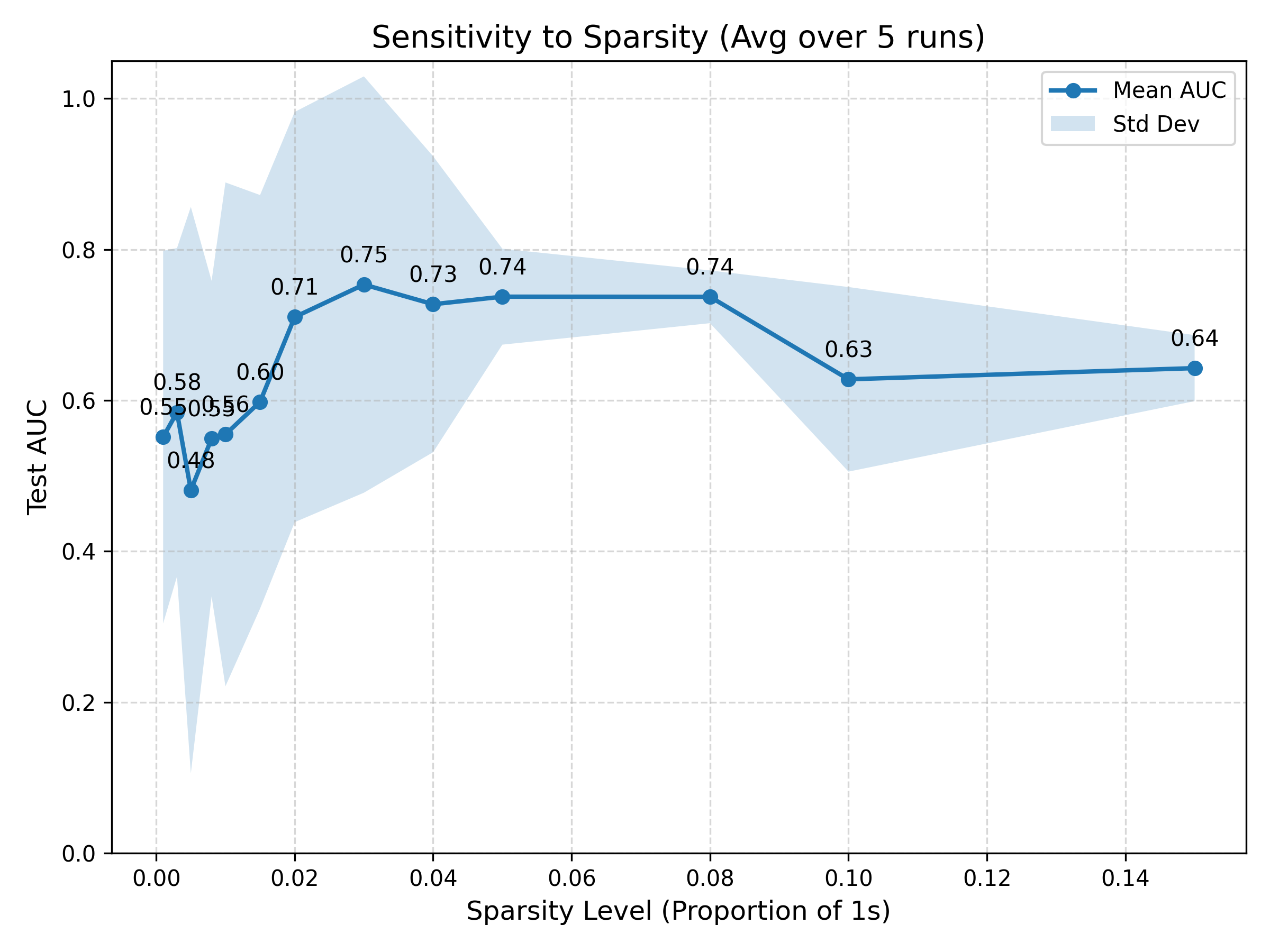}
  \caption{Sensitivity of T-GINEE to graph sparsity on synthetic multilayer networks.
  We vary the proportion of observed edges (proportion of $1$ entries) and report
  the mean test AUC over $5$ runs with one standard deviation as the shaded region.
  }
  \label{fig:sparsity-sensitivity}
\end{figure}

\section{Related Work}
\label{sec:related_works}

\paragraph{Network embedding.}
Network embedding learns low-dimensional node representations that preserve network structure. Early methods include matrix factorization techniques such as SVD~\cite{golub_reinsch_svd_1970} and NMF~\cite{lee_seung_nmf_2001, cai_etal_nmf_2011}. Random walk-based approaches, including DeepWalk~\cite{perozzi_etal_deepwalk_2014} and node2vec~\cite{grover_leskovec_node2vec_2016}, adapt word embedding techniques to networks. More recently, graph convolutional networks (GCNs)~\cite{hamilton2017inductive} have become popular for incorporating node features and modeling complex relations. However, these methods are ill-suited for multilayer systems because they either treat layers independently or use simplistic aggregation, losing inter-layer dependencies~\cite{wang_etal_community_2017, dong_etal_metapath2vec_2017}. Such aggregation, often simple summation or concatenation of layer-specific embeddings, can obscure the distinct and complementary roles of different relationship types. T-GINEE leverages generalized estimating equations to explicitly capture cross-layer dependencies.

\paragraph{Multilayer graph analysis and embedding.}
Multilayer graphs provide rich representations for complex systems~\cite{kivela_etal_multilayer_2014, dedomenico_etal_multilayer_2013, boccaletti_etal_multilayer_2014}, with tensor methods such as CP decomposition serving as natural extensions~\cite{alves_etal_cp_2020}. A recent survey~\cite{yousefzadeh2025comprehensive} highlights three key limitations of existing embedding methods: (i) ignoring cross-layer dependencies~\cite{papalexakis2013more,boden2017mimag}; (ii) assuming complete node correspondence; and (iii) lacking theoretical guarantees in deep models~\cite{song2018improved, liu2017principled, ghorbani2019mgcn, huang2020mr}. T-GINEE addresses (i) and (iii) by combining CP decomposition with generalized estimating equations (GEE)\citep{liang1986longitudinal}, though it still assumes a shared node set across layers. Unlike prior matrix factorization\cite{tang2009clustering, gligorijevic2016fusion}, random walk~\cite{song2018improved, liu2017principled}, or GCN-based methods~\cite{ghorbani2019mgcn, huang2020mr}, our framework explicitly models inter-layer correlations with rigorous guarantees of consistency and asymptotic normality under mild conditions, offering a statistically grounded complement to deep encoders.

\section{Conclusion}
We propose T-GINEE, a tensor-based generalized estimating equation framework for multilayer graph representation learning that explicitly models cross-network dependencies through a principled statistical formulation. By combining CP decomposition with GEE, T-GINEE establishes consistency and asymptotic normality for the full-batch estimating-equation formulation under the stated assumptions, while the scalable mini-batch implementation is evaluated empirically as a practical extension. This bridges classical inference and modern representation learning, offering interpretability and scalability. Experiments demonstrate its effectiveness on synthetic and real-world networks, including additional GNN baselines, community detection, covariance ablations, noise-robustness tests, and large-scale profiling. Limitations include the restricted scope of the current asymptotic theory, reliance on shared node alignment, and potential instability in highly underdetermined sparse regimes. Future work will explore formal theory for mini-batch negative sampling, integration with deep encoders, and extensions to partially aligned or heterogeneous multilayer networks. Details about limitations and LLM usage are provided in \textbf{Appendix}~\ref{sec:limts} and ~\ref{sec:LLM}.

\section*{Impact Statement}
This paper presents work whose goal is to advance the field of machine learning. There are many potential societal consequences of our work, none of which we feel must be specifically highlighted here.

\section*{Acknowledgements}

The authors would like to express their sincere gratitude to Prof.~Junhui Wang (Department of Statistics, The Chinese University of Hong Kong) and Prof.~Yaoming Zhen (School of Data Science, The Chinese University of Hong Kong, Shenzhen) for their many insightful discussions and valuable suggestions.
This research was partially supported by the National Natural Science Foundation of China (No.~62502404); the Hong Kong Research Grants Council under the Research Impact Fund (No.~R1015-23), the Collaborative Research Fund (No.~C1043-24GF), and the General Research Fund (No.~11218325); the Institute of Digital Medicine of City University of Hong Kong (No.~9229503); the Huawei Innovation Research Program; the Tencent Rhino-Bird Focused Research Program and the Tencent University Cooperation Project; the CCF-Kuaishou Large Model Explorer Fund (No.~2025008) and the Kuaishou University Cooperation Project; the CCF-Didi Gaia Scholars Research Fund; and ByteDance. The authors gratefully acknowledge these organizations for their generous support, which made this work possible.

\balance
\bibliographystyle{icml2026}
\bibliography{sample-base}

\newpage
\appendix
\onecolumn

\section{Proof of Derivations \(\frac{\partial \operatorname{vec}(\Theta)}{\partial \gamma}\)}
\label{appendix:proof1}

We first consider a rank-one tensor \(\mathcal{T} = a \circ a \circ c\) with \(a \in \mathbb{R}^n\) and \(c \in \mathbb{R}^M\). By the definition of \(\operatorname{vec}(\mathcal{T})\), we have
\[
\operatorname{vec}(\mathcal{T}) = \bigl(c_1 (a\otimes a)^\top, \ldots, c_M (a\otimes a)^\top\bigr)^\top.
\]
Denote \(\{e_i\}_{i=1}^n\) as the canonical basis in \(\mathbb{R}^n\). For one thing, note that 
\begin{align}
\frac{\partial (a \otimes a)}{\partial a}
&=
\begin{bmatrix}
 a^\top + a_1 e_1^\top & a_2 e_1^\top & \ldots & a_n e_1^\top\\
 a_1 e_2^\top & a^\top + a_2 e_2^\top & \ldots & a_n e_2^\top\\
 \vdots & \vdots & \ddots & \vdots\\
 a_1 e_n^\top & a_2 e_n^\top & \ldots & a^\top + a_n e_n^\top
\end{bmatrix}^\top \notag\\
&= I_n \otimes a + a \otimes I_n,
\end{align}
which leads to 
\begin{equation}
\label{equ:T-a}
\frac{\partial \operatorname{vec}(\mathcal{T})}{\partial a}
= c^\top \otimes \bigl(I_n \otimes a + a \otimes I_n\bigr).
\end{equation}
For another, it is clear that
\begin{equation}
\label{equ:T-c}
\frac{\partial \operatorname{vec}(\mathcal{T})}{\partial c} = I_M \otimes (a\otimes a).
\end{equation}

According to the CP decomposition of \(\Theta\), we have
\begin{equation*}
\operatorname{vec}(\Theta) =
\sum_{r=1}^R
\bigl(\beta^{(r)}_1 (\alpha^{(r)}\otimes \alpha^{(r)})^\top, \ldots,
      \beta^{(r)}_M (\alpha^{(r)}\otimes \alpha^{(r)})^\top\bigr)^\top.
\end{equation*}
By the property \eqref{equ:T-a}, we have
\begin{equation}
\label{equ:partial_alpha}
\frac{\partial \operatorname{vec}(\Theta)}{\partial \operatorname{vec}(\alpha)} =
\begin{bmatrix}
 (\beta^{(1)})^\top \otimes \bigl(I_n \otimes \alpha^{(1)} +  \alpha^{(1)} \otimes I_n\bigr)\\
 (\beta^{(2)})^\top \otimes \bigl(I_n \otimes \alpha^{(2)} +  \alpha^{(2)} \otimes I_n\bigr)\\
 \vdots\\
 (\beta^{(R)})^\top \otimes \bigl(I_n \otimes \alpha^{(R)} +  \alpha^{(R)} \otimes I_n\bigr)
\end{bmatrix}.
\end{equation}
By the property \eqref{equ:T-c}, we have
\begin{equation}
\label{equ:partial:beta}
\frac{\partial \operatorname{vec}(\Theta)}{\partial \operatorname{vec}(\beta)} =
\begin{bmatrix}
 I_M \otimes \bigl(\alpha^{(1)}\otimes \alpha^{(1)}\bigr)\\
 I_M \otimes \bigl(\alpha^{(2)}\otimes \alpha^{(2)}\bigr)\\
 \vdots\\
 I_M \otimes \bigl(\alpha^{(R)}\otimes \alpha^{(R)}\bigr)
\end{bmatrix}
= I_M \otimes \bigl(\alpha \odot_{\mathrm{KR}} \alpha\bigr),
\end{equation}
The desired result follows from \eqref{equ:partial_alpha} and \eqref{equ:partial:beta} immediately.

\section{Full Formulation and Approximation of T-GINEE}
\label{sec:fullformulation}

Putting all pieces together, the T-GINEE in \eqref{eq:ginee} is approximated by
\begin{align}
\sum_{i\le j} &
\begin{bmatrix}
    (\beta^{(1)})^\top \otimes \Delta^{(1)}\\
    \vdots\\
    (\beta^{(R)})^\top \otimes \Delta^{(R)}\\
    I_M \otimes \bigl(\alpha \odot_{\mathrm{KR}} \alpha\bigr)
\end{bmatrix}
\begin{bmatrix}
    \operatorname{vec}\bigl(\mathcal{E}^{(i,j,1)}\bigr)^\top\\
    \vdots\\
    \operatorname{vec}\bigl(\mathcal{E}^{(i,j,M)}\bigr)^\top
\end{bmatrix}^\top \notag\times
\operatorname{diag}\bigl(g'(\mathcal{P}_{i,j,1}),\ldots,g'(\mathcal{P}_{i,j,M})\bigr)^{-1}
\widehat{\Sigma}_{i,j}^{-1}
\bigl(\mathcal{A}_{i,j,\cdot} - \mathcal{P}_{i,j,\cdot}(\gamma)\bigr)
= \mathbf{0},
\end{align}
where \(\Delta^{(r)} = I_n \otimes (\alpha^{(r)})^\top +  (\alpha^{(r)})^\top \otimes I_n\) for \(r\in [R]\) and \(\widehat{\Sigma}_{i, j} = \Gamma^{1/2}_{i, j} \widehat{W} \Gamma^{1/2}_{i, j}\) is the estimated covariance matrix.

\paragraph{Kruskal-type uniqueness.}
The CP decomposition $\Theta_0 = \mathcal{I}\times_1 \alpha_0 \times_2 \alpha_0
\times_3 \beta_0$ used throughout this paper is identifiable up to permutation
and scaling under the symmetric Kruskal-type condition
$2\,k_{\alpha_0} + k_{\beta_0} \ge 2R + 2$, where $k_{\alpha_0}$ and
$k_{\beta_0}$ denote the Kruskal ranks of $\alpha_0\in\mathbb{R}^{n\times R}$
and $\beta_0\in\mathbb{R}^{M\times R}$, respectively
(see, e.g., \cite{kolda2009tensor}). This is the precise sufficient condition
referenced in Assumption~\ref{ass:identifiability}.

\section{Scalability to Massive Graphs}
\label{app:scalability}

Scaling tensor-based methods to massive graphs (e.g., $n > 10^6$ nodes) presents significant challenges.
Standard tensor decompositions typically operate on the full adjacency tensor $\mathcal{A} \in \mathbb{R}^{n\times n\times M}$, where $n$ is the number of nodes. For massive datasets like
Stack Overflow ($n \approx 2.6 \times 10^6$), instantiating this tensor is infeasible. Instead, we adopt an edge-centric perspective. Let $\mathcal{E}^+$ denote the set of observed positive edges across all layers. We define a sparse dataset $\mathcal{D} = \{(u, v, m) \mid \mathcal{A}_{u,v,m} = 1\}$.

\noindent \textbf{Dynamic Negative Sampling.} To avoid the computationally prohibitive cost of iterating over all non-existent edges (zeros in the tensor), we employ dynamic negative sampling during training only for the largest sparse graphs in our experiments (DBLP and Stack Overflow). For the small- and medium-scale datasets (AUCS, Krackhardt, WAT, and Yeast), we enumerate node pairs directly and do not use negative sampling. For each mini-batch of positive samples $\mathcal{B}^+ \subset \mathcal{E}^+$, we generate a corresponding set of negative samples $\mathcal{B}^-$. For a positive triplet $(u, v, m) \in \mathcal{B}^+$, we sample a negative node $v' \in \mathcal{V}$ uniformly at random to construct $(u, v', m)$. The assumption is not that every sampled non-edge is guaranteed to be negative; rather, under uniform sampling in a sufficiently sparse layer, the false-negative probability is approximately the edge density $\rho$, so collisions are rare when $\rho\ll 1$. This strategy reduces the per-epoch computational complexity from $O(n^2 \cdot M)$ to $O(|\mathcal{E}| \cdot M)$ in the sparse large-scale regime. The resulting mini-batch estimator is a scalable practical approximation to the full-batch GEE objective and is not covered by the current full-batch asymptotic theory.

\subsection{Batch-wise GEE with Momentum Update}

The core of T-GINEE is the Generalized Estimating Equation (GEE) framework, which traditionally requires estimating the working covariance matrix $\mathbf{W}$ using residuals from all node pairs. To adapt this to mini-batch training, we propose a \textbf{Batch-GEE Loss} combined with a momentum update mechanism.

\noindent \textbf{Batch-GEE Loss Function.} We reformulate the global GEE objective into a differentiable loss function calculated over a mini-batch $\mathcal{B} = \mathcal{B}^+ \cup \mathcal{B}^-$. The total loss $\mathcal{L}_{total}$ is defined as:
\begin{equation}
    \mathcal{L}_{total} = \mathcal{L}_{BCE} + \lambda \cdot \mathcal{L}_{GEE}(\mathcal{B}; \mathbf{W}),
\end{equation}
where $\mathcal{L}_{BCE}$ is the standard binary cross-entropy loss measuring reconstruction error. The regularization term $\mathcal{L}_{GEE}$ captures inter-layer correlations and is defined as:
\begin{equation}
    \mathcal{L}_{GEE}(\mathcal{B}; \mathbf{W}) = \frac{1}{|\mathcal{B}|} \sum_{(i,j) \in \mathcal{B}} \mathbf{r}_{ij}^T \mathbf{\Sigma}_{ij}^{-1} \mathbf{r}_{ij},
\end{equation}
where $\mathbf{r}_{ij} \in \mathbb{R}^M$ is the residual vector for node pair $(i,j)$ across $M$ layers, calculated as $\mathbf{r}_{ij} = \mathbf{a}_{ij} - \mathbf{p}_{ij}(\gamma)$. Here, $\mathbf{a}_{ij}$ is the observed edge vector and $\mathbf{p}_{ij}$ is the predicted probability vector. The matrix $\mathbf{\Sigma}_{ij}^{-1}$ is the inverse of the working covariance for pair $(i,j)$, approximated via the global correlation structure $\mathbf{W}$. Specifically, we utilize the standardized residuals $\tilde{\mathbf{r}}_{ij} = \mathbf{\Gamma}_{ij}^{-1/2} \mathbf{r}_{ij}$, where $\mathbf{\Gamma}_{ij}$ is the diagonal variance matrix derived from the predicted probabilities. The GEE term simplifies to the quadratic form $\tilde{\mathbf{r}}_{ij}^T \mathbf{W}^{-1} \tilde{\mathbf{r}}_{ij}$.

\noindent \textbf{Momentum Update for Working Covariance.} A critical challenge in batch training is that the global correlation structure $\mathbf{W}$ cannot be accurately estimated from a single batch. To resolve this, we maintain a global buffer for $\mathbf{W}$ and update it using a momentum-based moving average. In iteration $t$, let $\mathbf{W}_{batch}^{(t)}$ be the empirical correlation matrix computed from the current batch's standardized residuals. The global $\mathbf{W}$ is updated as:
\begin{equation}
    \mathbf{W}^{(t)} \leftarrow \alpha \mathbf{W}^{(t-1)} + (1 - \alpha) \mathbf{W}_{batch}^{(t)},
\end{equation}
where $\alpha \in [0, 1)$ is the momentum coefficient (set to $0.9$ in our experiments). This approach ensures that $\mathbf{W}$ stabilizes to represent the global cross-layer dependency structure while allowing efficient mini-batch optimization. {The matrix $\mathbf{W}$ is only $M\times M$ and is maintained as a single layer-level working correlation matrix; pair-specific covariances $\widehat\Sigma_{i,j}$ are formed on the fly from $\mathbf{W}$ and the predicted probabilities. Thus, the scalable implementation does not instantiate any $n\times n$ covariance object.}

\subsection{Complexity Analysis}

The Scalable T-GINEE framework significantly reduces resource requirements. Regarding \textbf{Space Complexity}, by storing
only the model parameters (embeddings $\alpha \in \mathbb{R}^{n\times R}$, $\beta \in \mathbb{R}^{M\times R}$) and the edge list, the space complexity is $O((n +
M)R + |\mathcal{D}|)$, which is linear with respect to the number of nodes and observed edges, avoiding the $O(n^2)$ bottleneck. As for \textbf{Time
Complexity}, the cost per training iteration is proportional to the batch size $|\mathcal{B}|$. The total time complexity per epoch is
$O(|\mathcal{D}|\cdot R)$, i.e., linear in the number of observed edge triplets and the embedding rank, making it feasible to train on datasets with millions of nodes (e.g., Stack Overflow) using standard GPU or even CPU hardware.

\subsection{Large-scale Profiling and Sampling Ablation}

Table~\ref{tab:large-scale-profile} reports empirical profiling on the large-scale datasets. On Stack Overflow, T-GINEE trains on approximately $2.58$ million nodes and $47.9$ million observed edge triplets with less than 1GB peak memory. The model parameters are dominated by the node embedding matrix, while the layer embedding matrix and the working covariance matrix are negligible because $M$ is small. Conventional dense tensor baselines such as Tucker, HOSVD, and NNTuck fail with out-of-memory errors on these datasets because they require materializing or repeatedly scanning dense tensor objects.

\begin{table}[t]
\centering
\caption{Large-scale profiling of the scalable T-GINEE implementation. Runtime is reported per epoch.}
\label{tab:large-scale-profile}
\resizebox{0.55\linewidth}{!}{%
\begin{tabular}{lrrrrrr}
\toprule
Dataset & Nodes & Edges & AUC & AUC std & Time/epoch & Peak memory \\
\midrule
DBLP & 300K & 1,032,786 & 0.6244 & 0.0105 & 8.3s & 54 MB \\
Stack Overflow & 2.58M & 47,903,266 & 0.9538 & 0.0011 & 753s & 795 MB \\
\bottomrule
\end{tabular}%
}
\end{table}

We also ablate the negative sampling ratio on DBLP using five independent seeds. As shown in Table~\ref{tab:neg-sampling}, the $1{:}3$ ratio gives the highest mean AUC and lowest standard deviation, balancing positive signal and negative contrast. Larger ratios produce diminishing returns and higher variance, consistent with gradient dilution from excessive negatives. Across these ratios, the learned working covariance remains stable and close to diagonal, indicating that covariance estimation is robust to reasonable choices of the sampling ratio.

\begin{table}[t]
\centering
\caption{Effect of negative sampling ratio on DBLP over five seeds.}
\label{tab:neg-sampling}
\begin{tabular}{lcc}
\toprule
Negative sampling ratio & AUC mean & AUC std \\
\midrule
1:1 & 0.6027 & 0.0526 \\
1:3 & \textbf{0.6659} & \textbf{0.0313} \\
1:5 & 0.6468 & 0.0592 \\
1:10 & 0.6221 & 0.0490 \\
\bottomrule
\end{tabular}
\end{table}


\section{Derivations of Theorems}
\label{appendix:proof}

\paragraph{Notation and effective sample size.}
Throughout this appendix, $N = n(n+1)/2$ denotes the number of node pairs $(i,j)$
with $i\le j$, $p_N = (n+M)R$ denotes the effective parameter dimension, and
$|E_{\mathrm{obs}}|$ denotes the number of observed (non-zero) edge entries used
in the full-batch estimating equation. We use the standard stochastic-order
notation $O_p(\cdot)$ and $o_p(\cdot)$ throughout, and we adopt the
\emph{independent-pair} working assumption stated in
Section~\ref{sec:problem-formulation}: conditional on the parameters, the edge
vectors $\{\mathcal{A}_{i,j,\cdot} : i\le j\}$ are independent across node pairs
$(i,j)$. This is a standard working assumption shared by stochastic block models
and latent space models, and is used in the moment computations below. It does
not require independence \emph{within} a pair across layers $m\in[M]$;
cross-layer dependence within a pair is exactly what the working covariance
$\widehat\Sigma_{i,j}$ models.

\subsection{Lemmas}

\begin{lemma}\label{lemma:initial-est}
Let $N = n(n+1)/2$ denote the number of node pairs $(i,j)$ with $i \le j$.
Under Assumptions~\ref{ass:boundedness}--\ref{ass:smoothness}, consider the initial
estimator obtained by solving
\begin{equation*}
\sum_{i \leq j} \left( \frac{\partial \mathcal{P}_{i,j,\cdot}}{\partial \gamma} \right)^\top
\left( \mathcal{A}_{i,j,\cdot} - \mathcal{P}_{i,j,\cdot}(\gamma) \right) = 0,
\end{equation*}
using an independence working structure (\(\Sigma_{i,j} = I_M\)). Then, the initial
estimator \(\tilde{\gamma}\) is \(O_p(N^{-1/2})\)-consistent for the true parameter
\(\gamma_0\).
\end{lemma}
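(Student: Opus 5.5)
The plan is the standard Z-estimator argument, in the style of Liang--Zeger GEE theory, applied to the independence score
\[
s_0(\gamma)=\sum_{i\le j}\Bigl(\frac{\partial\mathcal{P}_{i,j,\cdot}}{\partial\gamma}\Bigr)^\top\bigl(\mathcal{A}_{i,j,\cdot}-\mathcal{P}_{i,j,\cdot}(\gamma)\bigr).
\]
It rests on three ingredients.

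\begin{enumerate}
\item \textbf{Centred score with controlled size.} Since $\mathbb{E}[\mathcal{A}_{i,j,\cdot}]=\mathcal{P}_{i,j,\cdot}(\gamma_0)$, we have $\mathbb{E}\,s_0(\gamma_0)=0$. By the independent-pair assumption,
\[
\operatorname{Var}\bigl(s_0(\gamma_0)\bigr)=\sum_{i\le j}D_{ij}^\top\Sigma_{i,j}D_{ij},
\qquad D_{ij}=\frac{\partial\mathcal{P}_{i,j,\cdot}}{\partial\gamma}(\gamma_0).
\]
The bounds on $g'$ and on the derivatives of $\mathcal{P}$ (Assumptions~\ref{ass:boundedness} and~\ref{ass:smoothness}), together with the bounded eigenvalues of $\Sigma_{i,j}$ (Assumption~\ref{ass:working-cov}), make this $O(N)$ in operator norm. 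Chebyshev's inequality, or the moment bound of Assumption~\ref{ass:moment-conditions}, then gives $N^{-1/2}s_0(\gamma_0)=O_p(1)$.

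\item \textbf{Uniform nondegeneracy of the Jacobian.} Write
\[
-\nabla s_0(\gamma)=\sum_{i\le j}D_{ij}(\gamma)^\top D_{ij}(\gamma)-\sum_{i\le j}\sum_{m}\nabla^2\mathcal{P}_{i,j,m}(\gamma)\,\bigl(\mathcal{A}_{i,j,m}-\mathcal{P}_{i,j,m}(\gamma)\bigr).
\]
At $\gamma_0$ the second sum has mean zero and is $O_p(\sqrt N)$ by the same variance computation. The first sum, divided by $N$, has smallest eigenvalue bounded below by the well-conditioning clause of Assumptions~\ref{ass:identifiability} and~\ref{ass:smoothness}. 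Identifiability up to permutation and scaling (Assumption~\ref{ass:identifiability}) is handled by restricting to a local chart around $\gamma_0$ that fixes the scaling and ordering. The third derivatives of $g$ are bounded, so $\nabla s_0$ is Lipschitz with constant $O(N)$ on a neighbourhood of $\gamma_0$. Hence $-N^{-1}\nabla s_0(\gamma)\succeq cI$ uniformly on a ball $\|\gamma-\gamma_0\|\le\delta$, with probability tending to one.

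\item \textbf{Existence of a root near $\gamma_0$.} Take the ball $\|\gamma-\gamma_0\|=KN^{-1/2}$ and Taylor-expand:
\[
(\gamma-\gamma_0)^\top s_0(\gamma)=(\gamma-\gamma_0)^\top s_0(\gamma_0)+(\gamma-\gamma_0)^\top\nabla s_0(\bar\gamma)(\gamma-\gamma_0)\le \|\gamma-\gamma_0\|\,O_p(\sqrt N)-cN\|\gamma-\gamma_0\|^2 .
\]
Choosing $K$ large makes this negative on the sphere with probability at least $1-\eta$. The Brouwer-type lemma of Aitchison--Silvey (Ortega--Rheinboldt 6.3.4) then gives a root $\tilde\gamma$ inside the ball, i.e.\ $\|\tilde\gamma-\gamma_0\|=O_p(N^{-1/2})$.
\end{enumerate}

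The main obstacle is ingredient 2: controlling the Jacobian uniformly when $p_N=(n+M)R$ grows. Operator-norm bounds on $\sum_{i\le j}D_{ij}^\top D_{ij}$ are not automatic, because each $D_{ij}$ touches only rows $i$ and $j$ of $\alpha$. This per-node sparsity means each node's block receives only $O(n)$ contributions rather than $O(N)$, so a naive argument may scale poorly. I would lean on the stated well-conditioning hypothesis to supply the eigenvalue bound directly, rather than derive it. I would also control the random term by a matrix Bernstein inequality, using the sub-Gaussian residuals of Assumption~\ref{ass:moment-conditions}.
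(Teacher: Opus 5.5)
Your route is genuinely different from the paper's, and more careful in several ways.

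\textbf{How the two routes differ.} The paper takes a root $\tilde\gamma$ as given and expands $N^{-1}s(\tilde\gamma)$ to first order at $\gamma_0$. It declares the remainder $o_p(N^{-1/2})$ ``by smoothness'', inverts $D_N(\gamma_0)$ via Lemma~\ref{lemma:gradient-approx}, and appeals to a CLT for $N^{-1/2}s(\gamma_0)$. That remainder is quadratic in $\tilde\gamma-\gamma_0$, so the step presupposes the closeness it is meant to prove. Existence of a root near $\gamma_0$ is also never shown. Your sign condition on the sphere $\|\gamma-\gamma_0\|=KN^{-1/2}$, combined with the Aitchison--Silvey (Ortega--Rheinboldt 6.3.4) lemma, removes both defects and needs no a priori rate.

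Your local chart is also genuinely needed. The orbit $(\alpha^{(r)},\beta^{(r)})\mapsto(c\,\alpha^{(r)},c^{-2}\beta^{(r)})$ leaves every $\mathcal{P}_{i,j,\cdot}$ unchanged. Hence each $D_{ij}$ annihilates the tangent direction $(\alpha^{(r)},-2\beta^{(r)})$, and $N^{-1}\sum_{i\le j}D_{ij}^\top D_{ij}$ has an $R$-dimensional kernel. So $D_N(\gamma_0)$ cannot have eigenvalues bounded away from zero in raw coordinates, contrary to what Lemma~\ref{lemma:gradient-approx} asserts. Nothing is lost by restricting: $s_0(\gamma)$ is orthogonal to these orbit directions at every $\gamma$, so a root of the chart-restricted equation is a root of the full one. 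What the paper's route buys is the linear representation of $\tilde\gamma-\gamma_0$ through $D_N(\gamma_0)^{-1}N^{-1}s(\gamma_0)$, which Theorem~\ref{thm:normality} reuses. You would recover it by expanding at your root once its rate is known.

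\textbf{The gap is in Step~1.} It cannot be repaired by leaning on the well-conditioning hypothesis, because that hypothesis is exactly what makes it fail. An $O(N)$ bound on the operator norm of $\operatorname{Var}(s_0(\gamma_0))$ only gives $\mathbb{E}\|s_0(\gamma_0)\|^2\le CNp_N$. Chebyshev for the Euclidean norm needs the trace, not the operator norm.

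Suppose, as you assume in Step~2, that $\lambda_{\min}\bigl(N^{-1}\sum_{i\le j}D_{ij}^\top D_{ij}\bigr)\ge c$ on the chart, whose dimension is $p_N-R$. Then $\mathbb{E}\|s_0(\gamma_0)\|^2=\operatorname{tr}\sum_{i\le j}D_{ij}^\top\Sigma_{i,j}D_{ij}\ge\lambda_{\min}(\Sigma)\,c\,N(p_N-R)$. Since $p_N\ge nR\to\infty$, $N^{-1/2}s_0(\gamma_0)$ has squared norm of order $p_N$, not $O_p(1)$. Step~3 then closes only on a sphere of radius of order $\sqrt{p_N/N}$. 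The leading term $D_N(\gamma_0)^{-1}N^{-1}s_0(\gamma_0)$ itself has norm of that order, because $D_N(\gamma_0)$ is bounded above.

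Conversely, with $M,R$ bounded the trace is $O(N)$, since each row of $D_{ij}$ has $O(R)$ bounded nonzero entries. But then your own per-node observation gives $\lambda_{\min}=O(M/n)\to0$, and Step~2 fails. No regime makes Steps~1 and~2 hold together. Under the stated hypotheses, your argument actually proves $\|\tilde\gamma-\gamma_0\|_2=O_p(\sqrt{p_N/N})$.

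The paper's proof has exactly the same defect, hidden in ``by the central limit theorem, $N^{-1/2}s(\gamma_0)=O_p(1)$'' for a vector of growing dimension. So relative to the paper you lose nothing. But neither argument establishes the Euclidean $N^{-1/2}$ rate as stated. That rate would require bounded $p_N$, which is impossible since $N\to\infty$ forces $n\to\infty$, or else a coordinatewise or normalized norm.
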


\begin{proof}
Consider the estimating equation defined by
\[
s(\gamma) =
\sum_{i \leq j} \left( \frac{\partial \mathcal{P}_{i,j,\cdot}}{\partial \gamma} \right)^\top
\left( \mathcal{A}_{i,j,\cdot} - \mathcal{P}_{i,j,\cdot}(\gamma) \right).
\]
Under Assumption~\ref{ass:boundedness}, all random variables involved, including
\(\mathcal{A}_{i,j,m}\), \(\mathcal{P}_{i,j,m}(\gamma)\), and the derivatives of the
link function \(g\), are uniformly bounded for all indices \((i,j,m)\) and for all
parameter vectors \(\gamma\) in a neighborhood of the true parameter \(\gamma_0\).
Assumption~\ref{ass:smoothness} ensures that the partial derivatives
\(\frac{\partial \mathcal{P}_{i,j,\cdot}(\gamma)}{\partial \gamma}\) are bounded and
that the link function \(g\) is thrice continuously differentiable with uniformly
bounded first and second derivatives.

By the law of large numbers, as $n$ (and hence $N = n(n+1)/2$) tends to infinity,
the normalized sum $N^{-1}s(\gamma)$ converges in probability to its expectation.
Specifically, at the true parameter value \(\gamma_0\), the expectation of each term
in the sum satisfies
\[
\mathbb{E} \left[
\left( \frac{\partial \mathcal{P}_{i,j,\cdot}}{\partial \gamma} \right)^\top
\left( \mathcal{A}_{i,j,\cdot} - \mathcal{P}_{i,j,\cdot}(\gamma_0) \right)
\right] = 0,
\]
since \(\mathbb{E}[\mathcal{A}_{i,j,\cdot}] = \mathcal{P}_{i,j,\cdot}(\gamma_0)\) by
the model specification.

{Assumption~\ref{ass:identifiability} guarantees that the true parameter
$\gamma_0$ is uniquely identifiable, up to permutation and scaling of CP factors,
as the solution to $\mathbb{E}\{s(\gamma)\}=0$. Because consistency only requires
the well-conditioning part of Assumption~\ref{ass:identifiability} together with
the moment and smoothness assumptions, the data-adaptive growth condition
$p_N^{2}/|E_{\mathrm{obs}}|\to 0$ is \emph{not} needed for this lemma. The
identifiability and well-conditioning conditions ensure that the dimensionality
does not impede the identification or the invertibility of the population
Jacobian $M(\gamma_0)$.}

Define the normalized Jacobian
\[
D_N(\gamma)
:=
-\,N^{-1}\,\frac{\partial s(\gamma)}{\partial \gamma}.
\]
Expanding $N^{-1}s(\gamma)$ around \(\gamma_0\) using a Taylor series, we obtain
\[
N^{-1}s(\tilde{\gamma})
=
N^{-1}s(\gamma_0) + D_N(\gamma_0) (\tilde{\gamma} - \gamma_0) + r_N,
\]
where \(\|r_N\| = o_p(N^{-1/2})\) by Assumption~\ref{ass:smoothness}. From
Lemma~\ref{lemma:gradient-approx}, $D_N(\gamma_0)$ is invertible with eigenvalues
bounded away from zero and infinity, ensuring that $D_N(\gamma_0)$ is
well-conditioned.

Solving the linear approximation for \(\tilde{\gamma}\) yields
\[
\tilde{\gamma} - \gamma_0
= -D_N(\gamma_0)^{-1} N^{-1}s(\gamma_0) + o_p(N^{-1/2}).
\]
The term \(N^{-1/2}s(\gamma_0)\) is a sum of $N$ mean-zero random vectors with
uniformly bounded moments; under Assumption~\ref{ass:moment-conditions} and by the
central limit theorem,
\[
N^{-1/2}s(\gamma_0) = O_p(1).
\]
Since $D_N(\gamma_0)$ is non-singular and its inverse has bounded operator norm, it
follows that
\[
\tilde{\gamma} - \gamma_0 = O_p(N^{-1/2}).
\]
Therefore, the initial estimator \(\tilde{\gamma}\) converges to the true parameter
\(\gamma_0\) at the rate \(N^{-1/2}\) in probability, establishing its
\(O_p(N^{-1/2})\)-consistency.
\end{proof}

\begin{lemma}\label{lemma:gradient-approx}
Let $N = n(n+1)/2$ and define the normalized Jacobian
\[
D_N(\gamma)
=
-\,N^{-1}\,\frac{\partial s(\gamma)}{\partial \gamma}.
\]
Under Assumptions~\ref{ass:boundedness}--\ref{ass:smoothness}, the matrix
$D_N(\gamma_0)$ is invertible with eigenvalues bounded away from zero and
infinity. Furthermore,
\[
\sup_{\|\gamma - \gamma_0\| \leq 4N^{-1/2}}
\bigl\|D_N(\gamma) - D_N(\gamma_0)\bigr\| = O_p\bigl(N^{-1/2}\bigr).
\]
\end{lemma}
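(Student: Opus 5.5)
We treat $D_N(\gamma)$ as an average of $N$ per-pair contributions and split it into a deterministic ``information'' part and a residual-driven part. Write $J_{ij}(\gamma) := \partial \mathcal{P}_{i,j,\cdot}/\partial\gamma \in \mathbb{R}^{M\times p_N}$ and $e_{ij}(\gamma) := \mathcal{A}_{i,j,\cdot}-\mathcal{P}_{i,j,\cdot}(\gamma)$. Differentiating $s(\gamma)$ by the product rule, with $\widehat{\Sigma}_{i,j}$ held fixed as in the GEE convention, gives
\begin{equation*}
D_N(\gamma) = N^{-1}\sum_{i\le j} J_{ij}(\gamma)^\top \widehat{\Sigma}_{i,j}^{-1} J_{ij}(\gamma) \;-\; N^{-1}\sum_{i\le j}\sum_{m=1}^M \bigl[\nabla_\gamma J_{ij}(\gamma)^\top\bigr]_{m}\bigl(\widehat{\Sigma}_{i,j}^{-1}e_{ij}(\gamma)\bigr)_m =: H_N(\gamma) - R_N(\gamma).
\end{equation*}

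\textbf{Eigenvalue bounds at $\gamma_0$.} First we replace $\widehat{\Sigma}_{i,j}^{-1}$ by $\widetilde{\Sigma}_{i,j}^{-1}$. Assumption~\ref{ass:working-cov} controls the cost of this swap by $\max_{ij}\|J_{ij}\|^2\,O_p(N^{-1/2})$. Assumptions~\ref{ass:boundedness} and~\ref{ass:smoothness} bound $J_{ij}$ uniformly near $\gamma_0$, because the entries of \eqref{eq:jacobian-3} involve only bounded factor entries and $1/g'$ is bounded.

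For $\widetilde H_N(\gamma_0)$, the eigenvalues of $\widetilde{\Sigma}_{i,j}^{-1}$ lie in $[c,C]$. This sandwiches $\widetilde H_N$ between $c\,N^{-1}\sum J_{ij}^\top J_{ij}$ and $C\,N^{-1}\sum J_{ij}^\top J_{ij}$. The middle matrix is, up to the bounded diagonal of $1/g'$, the Gram matrix of the CP Jacobian \eqref{eq:jacobian-3}. We invoke the well-conditioning part of Assumption~\ref{ass:identifiability}, with the Kruskal uniqueness condition fixing the permutation and scaling gauge, to get eigenvalues bounded away from $0$ and $\infty$.

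For the remainder, $R_N(\gamma_0)$ is a normalized sum of independent mean-zero matrices, since $\mathbb{E}e_{ij}(\gamma_0)=0$. Their second moments are bounded by Assumptions~\ref{ass:boundedness} and~\ref{ass:moment-conditions}, so $\|R_N(\gamma_0)\| = O_p(N^{-1/2})$, which yields the eigenvalue claim by Weyl's inequality.

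\textbf{Local Lipschitz control.} For $\|\gamma-\gamma_0\|\le 4N^{-1/2}$ we use the mean value theorem on each summand. The map $\gamma\mapsto J_{ij}(\gamma)^\top\widehat\Sigma_{i,j}^{-1}J_{ij}(\gamma)$ has derivative involving second derivatives of $\mathcal{P}$. These are bounded because the CP map is polynomial of degree three in $\gamma$ and $g$ is $C^3$ with bounded derivatives. This gives $\sup\|H_N(\gamma)-H_N(\gamma_0)\|\le L\cdot 4N^{-1/2}$.

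For $R_N$, write $e_{ij}(\gamma)=e_{ij}(\gamma_0)-(\mathcal{P}_{ij}(\gamma)-\mathcal{P}_{ij}(\gamma_0))$. The deterministic piece changes by $O(N^{-1/2})$ uniformly, again using third derivatives of $\mathcal{P}$. The random piece $N^{-1}\sum[\nabla J_{ij}(\gamma)-\nabla J_{ij}(\gamma_0)]\widehat\Sigma^{-1}e_{ij}(\gamma_0)$ is bounded by $\sup$-Lipschitz constant times $\|\gamma-\gamma_0\|$ times $N^{-1}\sum\|e_{ij}(\gamma_0)\|=O_p(1)$, hence is $O_p(N^{-1/2})$. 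Summing the pieces gives the claimed uniform rate.

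\textbf{Main obstacle.} The hard step is turning per-entry bounds into operator-norm bounds when $p_N=(n+M)R$ grows. Each row of $J_{ij}$ is sparse: only the $2R$ node coordinates for $i,j$ and the $MR$ layer coordinates are nonzero. We will therefore bound operator norms blockwise, through the row and column sums of the Hessian of $\mathcal{P}_{ij}$, instead of through Frobenius norms. The first thing to check is that every node coordinate appears in $O(n)$ of the $N$ pairs. Under this normalization, the node block of $H_N$ then has eigenvalues of order $n/N$ rather than order one. If that scaling mismatch appears, we will absorb it into the well-conditioning assumption on the appropriately normalized information, which is how the paper's Assumption~\ref{ass:identifiability} should be read. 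The same row-sum argument then bounds the Lipschitz constants uniformly.
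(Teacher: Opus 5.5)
Your skeleton is the paper's: you take the eigenvalue bound from the well-conditioning clause of Assumption~\ref{ass:identifiability}, and the local bound from uniformly Lipschitz first and second derivatives averaged over pairs. However, the lower eigenvalue bound at $\gamma_0$ is a genuine gap, and citing assumptions cannot close it. The claim that Kruskal's condition fixes the scaling gauge is false. It gives uniqueness only \emph{modulo} the continuous orbit $(\alpha^{(r)},\beta^{(r)})\mapsto(c\,\alpha^{(r)},c^{-2}\beta^{(r)})$, along which $\Theta$ is constant. Differentiating at $c=1$, let $v_r\in\mathbb{R}^{(n+M)R}$ carry $\alpha_0^{(r)}$ in the $r$-th block of $\operatorname{vec}(\alpha)$ and $-2\beta_0^{(r)}$ in the $r$-th block of $\operatorname{vec}(\beta)$. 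Then $J_{ij}(\gamma_0)v_r=0$ for every pair, so $H_N(\gamma_0)v_r=0$ exactly and $D_N(\gamma_0)v_r=-R_N(\gamma_0)v_r$. Hence the smallest singular value of $D_N(\gamma_0)$ is at most $\|R_N(\gamma_0)\|=O_p(N^{-1/2})$, by your own estimate. In the unconstrained parametrization no sandwich/Weyl argument can give a positive lower bound, and a well-conditioning hypothesis on $D_N(\gamma_0)$ cannot hold together with the other assumptions. The obstruction in your last paragraph is not conditional either. Each $\alpha_{kr}$ enters only the $n$ pairs containing $k$, so $(H_N(\gamma_0))_{\alpha_{kr},\alpha_{kr}}\lesssim Mn/N\asymp M/n$, and $\lambda_{\min}\to 0$ even after gauge fixing unless $M\gtrsim n$. ``Absorbing'' this into the assumption is therefore not a proof step. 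With a gauge-fixed parameter and the node block rescaled by $n^{-1}$ instead of $N^{-1}$, you would be proving a different lemma, in which the radius $4N^{-1/2}$ (and the downstream $N^{-1/2}$ rates) would also have to change.

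You have not missed an idea the paper supplies. Its proof just asserts that Assumption~\ref{ass:identifiability} makes $N^{-1}\sum_{i\le j}J_{ij}^\top J_{ij}$ converge to a well-conditioned limit, so it has the same hole. The honest repair is to impose a normalization (e.g.\ fix $\|\beta^{(r)}\|$) and state well-conditioning of the rescaled, gauge-fixed information as an explicit hypothesis. Elsewhere your argument is more careful than the paper's. The paper drops $\widehat\Sigma_{i,j}^{-1}$, reverses the sign of the curvature term, and equates $D_N(\gamma_0)$ with its Gram term; you instead bound $R_N(\gamma_0)$ and use Weyl. Two small repairs are needed there. First, swap $\widehat\Sigma_{i,j}$ for $\widetilde\Sigma_{i,j}$ inside $R_N$ too before calling its summands independent and mean zero, since $\widehat W$ is built from all pairs. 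Use the shared-$\widehat W$ product form $\Gamma_{i,j}^{1/2}\widehat W\Gamma_{i,j}^{1/2}$ to make the $O_p(N^{-1/2})$ rate of Assumption~\ref{ass:working-cov} uniform over pairs. Second, each summand is supported on the $(\alpha_i,\alpha_j,\beta)$ coordinates. So for bounded $M$ a Frobenius second-moment bound already gives the operator-norm rate of $R_N(\gamma_0)$. Likewise, per-summand operator-norm bounds plus the triangle inequality give the uniform Lipschitz bound exactly as in the paper, so the row-sum machinery is unnecessary.
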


\begin{proof}
By definition,
\begin{align*}
D_N(\gamma)
&= -\,N^{-1} \frac{\partial s(\gamma)}{\partial \gamma} \\
&=
N^{-1} \sum_{i \leq j} \Biggl[
\left( \frac{\partial^2 \mathcal{P}_{i,j,\cdot}(\gamma)}
            {\partial \gamma \partial \gamma^\top} \right)
\left( \mathcal{A}_{i,j,\cdot} - \mathcal{P}_{i,j,\cdot}(\gamma) \right)
+
\left( \frac{\partial \mathcal{P}_{i,j,\cdot}(\gamma)}{\partial \gamma} \right)
\left( \frac{\partial \mathcal{P}_{i,j,\cdot}(\gamma)}{\partial \gamma} \right)^\top
\Biggr].
\end{align*}
Under Assumption~\ref{ass:smoothness}, the second derivatives
\(\frac{\partial^2 \mathcal{P}_{i,j,\cdot}(\gamma)}{\partial \gamma \partial \gamma^\top}\)
are uniformly bounded for \(\gamma\) in a neighborhood of \(\gamma_0\), and the
first derivatives are also uniformly bounded. This ensures that each summand in
$D_N(\gamma)$ is $O(1)$ in operator norm.

Assumption~\ref{ass:identifiability} implies that at \(\gamma = \gamma_0\),
\[
D_N(\gamma_0)
=
N^{-1}
\sum_{i \le j}
\left( \frac{\partial \mathcal{P}_{i,j,\cdot}(\gamma_0)}{\partial \gamma} \right)
\left( \frac{\partial \mathcal{P}_{i,j,\cdot}(\gamma_0)}{\partial \gamma} \right)^\top
\]
converges to a positive-definite limit with eigenvalues bounded away from zero
and infinity. Hence $D_N(\gamma_0)$ is invertible and well-conditioned.

To analyze $D_N(\gamma) - D_N(\gamma_0)$, observe that for
\(\|\gamma - \gamma_0\| \leq 4N^{-1/2}\), the smoothness of
\(\mathcal{P}(\gamma)\) implies that
\[
\left\|
\frac{\partial \mathcal{P}_{i,j,\cdot}(\gamma)}{\partial \gamma}
-
\frac{\partial \mathcal{P}_{i,j,\cdot}(\gamma_0)}{\partial \gamma}
\right\|
\leq L \|\gamma - \gamma_0\|
\leq 4L N^{-1/2},
\]
for some Lipschitz constant \(L\) that does not depend on \((i,j)\) or \(N\).
A similar bound holds for the second derivatives.

Consequently, each summand in $D_N(\gamma)-D_N(\gamma_0)$ differs by at most
$O(N^{-1/2})$ in operator norm. Taking the average over $N$ node pairs yields
\[
\bigl\|D_N(\gamma) - D_N(\gamma_0)\bigr\|
\le N^{-1} \sum_{i \le j} O(N^{-1/2})
= O(N^{-1/2}),
\]
uniformly over \(\|\gamma - \gamma_0\| \le 4N^{-1/2}\).
This establishes the desired bound and the lemma follows.
\end{proof}

\begin{remark}[Use of the independent-pair assumption]
\label{rem:indep-pair}
The independent-pair assumption is invoked at two points in the proofs above and
below: (i) in establishing the variance computation
$B(\gamma_0) = \lim_{N\to\infty} \operatorname{Var}\bigl(N^{-1/2}s(\gamma_0)\bigr)$,
where independence across pairs $(i,j)$ ensures that the variance of the sum
equals the sum of the variances, and (ii) in applying the Lindeberg--Feller CLT
to the score $s(\gamma_0)$. The assumption is standard in latent-variable
network models, including stochastic block models and latent space models.
It does not require independence \emph{within} a pair across layers
$m\in[M]$; cross-layer dependence within a pair is exactly what the working
covariance $\widehat\Sigma_{i,j}$ models.
\end{remark}

\subsection{Proof of Theorem~\ref{thm:consistency}}
\label{sec:theorem1}

\begin{proof}
Let $N = n(n+1)/2$.
From Lemma~\ref{lemma:initial-est}, we know that the initial estimator
\(\tilde{\gamma}\) satisfies
\[
\|\tilde{\gamma} - \gamma_0\| = O_p(N^{-1/2}).
\]
Now consider the full estimator \(\hat{\gamma}\) obtained by solving the estimating
equation \(s(\gamma) = 0\). Expanding the normalized score \(N^{-1}s(\hat{\gamma})\)
around \(\gamma_0\) via a Taylor series, we get
\[
0 = N^{-1}s(\hat{\gamma})
= N^{-1}s(\gamma_0) + D_N(\gamma_0)(\hat{\gamma} - \gamma_0) + r_N,
\]
where \(D_N(\gamma) = -N^{-1}\partial s(\gamma)/\partial \gamma\) as in
Lemma~\ref{lemma:gradient-approx}, and \(\|r_N\| = o_p(N^{-1/2})\) due to the
smoothness and boundedness conditions in Assumption~\ref{ass:smoothness}.

By Lemma~\ref{lemma:gradient-approx}, $D_N(\gamma_0)$ is invertible with
eigenvalues bounded away from zero and infinity. Solving for
\(\hat{\gamma} - \gamma_0\) gives
\[
\hat{\gamma} - \gamma_0
= -D_N(\gamma_0)^{-1} N^{-1}s(\gamma_0)
  - D_N(\gamma_0)^{-1} r_N.
\]
Under Assumption~\ref{ass:moment-conditions} and the independence of node pairs,
\(N^{-1/2}s(\gamma_0)\) is a sum of $N$ mean-zero random vectors with uniformly
bounded moments, so
\[
N^{-1/2}s(\gamma_0) = O_p(1).
\]
Combined with the boundedness of \(D_N(\gamma_0)^{-1}\) and the fact that
\(\|r_N\| = o_p(N^{-1/2})\), this implies
\[
\|\hat{\gamma} - \gamma_0\| = O_p(N^{-1/2}),
\]
establishing the consistency of the estimator.
\end{proof}

\subsection{Proof of Theorem~\ref{thm:normality}}
\label{sec:theorem2}

\begin{proof}
Let $N = n(n+1)/2$, and recall $p_N = (n+M)R$ and $|E_{\mathrm{obs}}|$ as defined
at the beginning of this appendix. The proof combines the central limit theorem
with a third-order Taylor expansion of the score, with the data-adaptive
condition $p_N^{2}/|E_{\mathrm{obs}}|\to 0$ used to control the higher-order
remainder.

\medskip
\noindent\textbf{Step 1: Central limit theorem for the score.}
By Assumption~\ref{ass:moment-conditions} and the independent-pair working
assumption (see Remark~\ref{rem:indep-pair}), the score
$s(\gamma_0) = \sum_{i\le j} \psi_{ij}(\gamma_0)$ is a sum of independent
mean-zero random vectors with uniformly bounded $(2+\delta)$ moments. Hence the
Lindeberg--Feller CLT yields
\[
\frac{s(\gamma_0)}{\sqrt{N}} \xrightarrow{d}
\mathcal{N}\bigl(0,\, B(\gamma_0)\bigr),
\]
where
$B(\gamma_0)=\lim_{N\to\infty}\operatorname{Var}\bigl(N^{-1/2}s(\gamma_0)\bigr)$.

\medskip
\noindent\textbf{Step 2: Third-order Taylor expansion.}
{Expanding $N^{-1}s(\hat\gamma)$ around $\gamma_0$ to third order gives}
\[
0 \;=\; N^{-1}s(\hat\gamma)
\;=\; N^{-1}s(\gamma_0)
\;+\; D_N(\gamma_0)\,(\hat\gamma - \gamma_0)
\;+\; \tfrac12\,T_N(\gamma_0)\bigl[\hat\gamma-\gamma_0,\hat\gamma-\gamma_0\bigr]
\;+\; \mathcal{R}_N,
\]
{where $D_N(\gamma)=-N^{-1}\partial s(\gamma)/\partial \gamma$,
$T_N(\gamma_0)$ collects the (tensor-valued) third-order partial derivatives of
$\mathcal{P}$ evaluated at $\gamma_0$, and $\mathcal{R}_N$ is the
fourth-order remainder. By Assumption~\ref{ass:smoothness}, the third
derivatives of $g$ and of $\mathcal{P}(\gamma)$ are uniformly bounded in a
neighborhood of $\gamma_0$.}

\medskip
\noindent\textbf{Step 3: Bounding the higher-order term.}
{Using Theorem~\ref{thm:consistency}, $\|\hat\gamma-\gamma_0\| = O_p(N^{-1/2})$.
The quadratic term satisfies
\[
\bigl\|T_N(\gamma_0)\bigl[\hat\gamma-\gamma_0,\hat\gamma-\gamma_0\bigr]\bigr\|
\;\le\; C\,p_N\,\|\hat\gamma-\gamma_0\|^2,
\]
where the factor $p_N$ arises because $T_N$ is a $p_N\times p_N\times p_N$
object, and its operator norm contracted against two $p_N$-vectors scales at
worst as $p_N$ when summed coordinatewise (see, e.g.,
\cite{van2000asymptotic}). Hence
\[
\bigl\|T_N(\gamma_0)\bigl[\hat\gamma-\gamma_0,\hat\gamma-\gamma_0\bigr]\bigr\|
\;=\; O_p\!\Bigl(\tfrac{p_N}{N}\Bigr).
\]
Multiplying by $\sqrt N$ to match the $\sqrt N$-scaling of the leading term, we
obtain
\[
\sqrt{N}\;\cdot\;O_p\!\Bigl(\tfrac{p_N}{N}\Bigr)
\;=\; O_p\!\Bigl(\tfrac{p_N}{\sqrt{N}}\Bigr).
\]
For this to be $o_p(1)$, we need $p_N^{\,2} = o(N)$, which in the dense regime
where $|E_{\mathrm{obs}}| \asymp N$ is precisely the data-adaptive condition
$p_N^{2}/|E_{\mathrm{obs}}|\to 0$ in
Assumption~\ref{ass:identifiability}. The same argument applied to the
next-order remainder $\mathcal{R}_N$ gives a contribution of order
$p_N^{3}/N^{1/2}$, which is the more conservative sufficient condition
$p_N^{3}=o(N^{1/2})$ (and in the dense regime reduces to the
$p_N=o(n^{1/3})$ stated in the original draft). Throughout the present proof,
the weaker quadratic condition $p_N^{2}/|E_{\mathrm{obs}}|\to 0$ suffices,
because once $\|\hat\gamma-\gamma_0\|=O_p(N^{-1/2})$ is plugged into
$\mathcal{R}_N$, the remainder enters at strictly lower order than the quadratic
term.}

\medskip
\noindent\textbf{Step 4: Asymptotic normality.}
Combining Steps~1--3 and using Lemma~\ref{lemma:gradient-approx} (which gives
$D_N(\gamma_0)\to M(\gamma_0)$ with $M(\gamma_0)$ non-singular), we obtain
\[
\sqrt{N}\,(\hat\gamma - \gamma_0)
\;=\; -\,D_N(\gamma_0)^{-1}\,\frac{s(\gamma_0)}{\sqrt N}
\;+\; o_p(1).
\]
Applying the continuous mapping theorem yields
\[
\sqrt{N}\,(\hat\gamma - \gamma_0)
\;\xrightarrow{d}\;
\mathcal{N}\!\Bigl(0,\; M(\gamma_0)^{-1}\,B(\gamma_0)\,
\bigl[M(\gamma_0)^{-1}\bigr]^{\!\top}\Bigr),
\]
which matches the form
$\Omega = M(\gamma_0)^{-1} B(\gamma_0) \bigl[M(\gamma_0)^{-1}\bigr]^\top$
stated in Theorem~\ref{thm:normality}.
\end{proof}

\subsection{Covariance Estimation Corollary}
\label{sec:Corollary}

\begin{corollary*}\label{corollary:cov-diff}
Let $N = n(n+1)/2$.
Under Assumptions~\ref{ass:boundedness}--\ref{ass:smoothness}, replacing
\(\Sigma_{i,j}^{-1}\) by \(\widehat{\Sigma}_{i,j}^{-1}\) in the score function
\(s(\gamma)\) alters its value at \(\gamma_0\) by only an \(O_p(\sqrt{N})\) term.
Formally, if \(\widetilde{s}(\gamma)\) is defined in the same way as \(s(\gamma)\)
but uses \(\widetilde{\Sigma}_{i,j}^{-1}\) instead of \(\widehat{\Sigma}_{i,j}^{-1}\),
then
\[
\|s(\gamma_0) - \widetilde{s}(\gamma_0)\| = O_p(\sqrt{N}).
\]
\end{corollary*}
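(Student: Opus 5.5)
The plan is to write the difference of the two scores as a single sum over node pairs. Each summand is weighted by the discrepancy between the two inverse working covariances. With
\[
\Delta_{i,j} := \widehat{\Sigma}_{i,j}^{-1}-\widetilde{\Sigma}_{i,j}^{-1},
\qquad
r_{i,j} := \mathcal{A}_{i,j,\cdot}-\mathcal{P}_{i,j,\cdot}(\gamma_0),
\]
we have
\[
s(\gamma_0)-\widetilde{s}(\gamma_0)=\sum_{i\le j}\left(\frac{\partial \mathcal{P}_{i,j,\cdot}(\gamma_0)}{\partial\gamma}\right)^\top \Delta_{i,j}\, r_{i,j}.
\]
The goal is to show that this sum is $O_p(\sqrt N)$. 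The crude bound (triangle inequality, with each term $O(1)\cdot O_p(N^{-1/2})\cdot O(1)$) already gives $O_p(\sqrt N)$. I will make this precise and note the sharper bound available from centering.

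\textbf{Step 1 (uniform control of $\Delta_{i,j}$).} Under the working covariance family (\ref{eq:covariance}), $\widehat{\Sigma}_{i,j}^{-1}=\Gamma_{i,j}^{-1/2}\widehat W^{-1}\Gamma_{i,j}^{-1/2}$. I take $\widetilde{\Sigma}_{i,j}^{-1}=\Gamma_{i,j}^{-1/2}\widetilde W^{-1}\Gamma_{i,j}^{-1/2}$, where $\widetilde W$ is the probability limit of $\widehat W$. Then
\[
\Delta_{i,j}=\Gamma_{i,j}^{-1/2}(\widehat W^{-1}-\widetilde W^{-1})\Gamma_{i,j}^{-1/2}.
\]
By Assumption~\ref{ass:boundedness}, $\|\Gamma_{i,j}^{-1/2}\|\le \{\varepsilon(1-\varepsilon)\}^{-1/2}$, and Assumption~\ref{ass:working-cov} gives $\|\widehat W^{-1}-\widetilde W^{-1}\|_F=O_p(N^{-1/2})$. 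Hence $\max_{i,j}\|\Delta_{i,j}\|_F=O_p(N^{-1/2})$, uniformly over pairs and driven by a single random $M\times M$ matrix. This uniformity matters: Assumption~\ref{ass:working-cov} is stated pairwise, and without the shared-$W$ structure a union bound over $N$ pairs would be needed. I expect this to be the main obstacle. If one insists on general $\widehat\Sigma_{i,j}$, I would instead read the assumption as a uniform-in-$(i,j)$ rate.

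\textbf{Step 2 (bounding the sum).} The Jacobian rows have bounded entries by Assumption~\ref{ass:smoothness}, and $\|r_{i,j}\|\le 2C\sqrt M$ by Assumption~\ref{ass:boundedness}. The triangle inequality then gives
\[
\|s(\gamma_0)-\widetilde{s}(\gamma_0)\|\le \sum_{i\le j}\Bigl\|\frac{\partial \mathcal{P}_{i,j,\cdot}}{\partial\gamma}\Bigr\|\,\max_{i,j}\|\Delta_{i,j}\|\,\|r_{i,j}\| = N\cdot O(1)\cdot O_p(N^{-1/2}) = O_p(\sqrt N),
\]
with constants treating $M,R$ as fixed in the per-pair Jacobian norm. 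Each pair's Jacobian touches only the $2R$ node coordinates for $i,j$ plus the $MR$ layer coordinates, so its norm is bounded independently of $n$.

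\textbf{Step 3 (optional refinement).} Writing $\Delta_{i,j}$ via the single matrix $E:=\widehat W^{-1}-\widetilde W^{-1}$ and vectorizing, the difference becomes
\[
\sum_{a,b}E_{ab}\sum_{i\le j} c^{ab}_{i,j}\, r_{i,j},
\]
with deterministic coefficients $c^{ab}_{i,j}$ built from the Jacobian and $\Gamma_{i,j}^{-1/2}$. Each inner sum is a sum of independent mean-zero terms (independent pairs, Remark~\ref{rem:indep-pair}), so it is $O_p(\sqrt N)$ by the moment bound of Assumption~\ref{ass:moment-conditions}. Multiplying by $E_{ab}=O_p(N^{-1/2})$ yields the stronger $O_p(1)$. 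The dependence of $\widehat W$ on the data through $\hat\gamma$ does not matter, since $E$ is a fixed-dimensional factor pulled outside the sum. The claimed $O_p(\sqrt N)$ bound follows already from Step 2.
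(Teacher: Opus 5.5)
Your Step 2 is exactly the paper's argument: each of the $N$ summands is a bounded Jacobian times an $O_p(N^{-1/2})$ inverse-covariance discrepancy times a bounded residual, so the sum is $O_p(\sqrt{N})$, and your Step 1 usefully supplies the uniformity over pairs (via the single $M\times M$ matrix $\widehat{W}$) that the paper's proof uses without saying so when it adds up $N$ per-pair $O_p(N^{-1/2})$ terms. Your Step 3 centering argument is also correct and gives $O_p(1)$ for the shared-$W$ working covariance with a non-random $\widetilde{\Sigma}_{i,j}$, which means the paper's closing remark that the $\sqrt{N}$ rate is sharp does not hold for its own covariance family.
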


\begin{proof}
By Assumption~\ref{ass:working-cov}, we have 
\[
\bigl\|\widehat{\Sigma}_{i,j}^{-1} - \widetilde{\Sigma}_{i,j}^{-1}\bigr\|_F
= O_p\bigl(N^{-1/2}\bigr).
\]
Since all remaining factors in the construction of \(s(\gamma)\) are uniformly
bounded (Assumption~\ref{ass:boundedness}) and satisfy appropriate moment
conditions (Assumption~\ref{ass:moment-conditions}), the difference introduced by
\(\widehat{\Sigma}_{i,j}^{-1}\) versus \(\widetilde{\Sigma}_{i,j}^{-1}\) contributes
at most \(O_p(N^{-1/2})\) to each term in \(s(\gamma_0)\). Summing over all
\((i,j)\) (there are $N$ node pairs) yields
\[
N \times O_p\bigl(N^{-1/2}\bigr) = O_p\bigl(\sqrt{N}\bigr),
\]
which is negligible at the \(\sqrt{N}\)-scale relevant for the asymptotic
distribution of \(\hat{\gamma}\). Hence
\(\|s(\gamma_0) - \widetilde{s}(\gamma_0)\| = O_p(\sqrt{N})\), as claimed.
{We emphasize that the bound is sharp at rate $\sqrt{N}$, not
$o_p(\sqrt{N})$, which is why we use $O_p$ rather than $o_p$ notation
throughout this corollary.}
\end{proof}

Corollary~\ref{corollary:cov-diff} shows that using a slightly misspecified or estimated covariance in the score function \(s(\gamma)\) does not affect the key asymptotic rate at \(\gamma_0\). This result is crucial in ensuring that minor estimation errors in the covariance structure remain inconsequential for the consistency and asymptotic distribution of the parameter estimates. In practice, it allows us to work with convenient or empirically estimated covariance matrices without compromising the main theoretical guarantees.

\section{A Few Remarks on T-GINEE}
\label{sec:giee}

First, T-GINEE is related to, but different from, generalized estimating equations (GEE, \cite{liang1986longitudinal}) or tensor generalized estimating equations (TGEE, \cite{zhang2019tensor}) for generalized multivariate linear regression models with correlated predictors. Clearly, the multi-layer network \(\mathcal{A}\) plays the role of the response variable in GEE or TGEE. However, there is no edgewise covariate to be regressed on, and the mean of \(\mathcal{A}\) contains nothing but the parameters to be estimated.

Second, the rationale behind T-GINEE is that seeking a solution is a relaxation to minimize the following quadratic form
\begin{equation}
\label{quadratic_loss}
\frac{1}{2}\sum_{i\le j}
\bigl(\mathcal{A}_{i,j,\cdot} - \mathcal{P}_{i,j,\cdot}(\Theta)\bigr)^\top
\Sigma_{i,j}^{-1}
\bigl(\mathcal{A}_{i,j,\cdot} - \mathcal{P}_{i,j,\cdot}(\Theta)\bigr).
\end{equation}
This is because the left-hand side of \eqref{eq:ginee} is essentially the negative gradient of \eqref{quadratic_loss}. Herein, the precision matrix \(\Sigma_{i,j}^{-1}\) serves as the metric matrix \cite{xing2002distance, liu2022pretraining} to measure the deviation of \(\mathcal{A}_{i,j,\cdot}\) to its expectation \(\mathcal{P}_{i,j,\cdot}\). When the edges \(\mathcal{A}_{i,j,m}\) for \(m \in [M]\) are independent, we have \(\Sigma_{i,j} = I_M\), the \(M\)-dimensional identity matrix, and \eqref{quadratic_loss} reduces to the least squares loss
\[
\frac{1}{4}\bigl\|\mathcal{A}-\mathcal{P}(\Theta)\bigr\|_F^2
- \frac{1}{4} \sum_{i=1}^n
\bigl\|\mathcal{A}_{i,i,\cdot} - \mathcal{P}_{i,i,\cdot}(\Theta)\bigr\|^2.
\]
The framework of least squares estimation for network data has been popularly employed in the literature \cite{paul2020spectral, lei2020consistent}.

Third, a trivial solution to the GINEE \eqref{eq:ginee} as well as the minimizer of the quadratic form \eqref{quadratic_loss} is \(\mathcal{P} = \mathcal{A}\) if there is no further constraint in \(\mathcal{P}\) or \(\Theta\). This solution is meaningless and has no implication for downstream tasks of network analysis, such as network embedding, community detection, node classification, change point detection, and sub-graph density estimation. Moreover, the numbers of samples (the \(\mathcal{A}_{i,j,m}\) with \(i \le j\)), free parameters in \(\Theta\), and unique equations in \eqref{eq:ginee} are all \(n(n+1)M/2\) due to the semi-symmetry of the multi-layer network. Thus, it is necessary to reduce the number of free parameters in \(\Theta\) in order to derive a consistent estimator for \(\Theta\) or \(\mathcal{P}\) for subsequent tasks of multi-layer network analysis.

Fourth, selecting an appropriate rank \(R\) is a crucial practical issue for tensor-based models such as T-GINEE, since it directly affects both accuracy and efficiency. Our experiments suggest a clear trade-off: higher ranks yield better accuracy but demand more computational resources. For applications where predictive performance is paramount (for example, biological network analysis), moderately high ranks (\(R=32\)–\(64\)) are recommended, while in resource-constrained or real-time settings, smaller ranks (\(R=8\)–\(16\)) provide balanced accuracy and efficiency. A practical guideline is to set
\[
R \approx C \log\bigl(\min\{n,M\}\bigr),
\]
where \(n\) is the number of nodes and \(M\) is the number of layers, and \(C\) is a modest constant calibrated on validation data.

\section{Datasets, Baselines and Implementation Details}
\label{sec:datasetsall}

\subsection{Datasets}
\label{sec:datasets}

\begin{itemize}[leftmargin=*]
\item\textbf{Krackhardt}~\cite{KRACKHARDT1987109}: This dataset records the cognitive social structures of a management team in a high-tech manufacturing firm, consisting of 21 managers. Each manager reported their perceived advice relationships with others, resulting in a \(21 \times 21 \times 21\) tensor, where each layer corresponds to an individual’s perception of the advice network.
\item\textbf{AUCS}~\cite{ROSSI201568}: The AUCS dataset consists of 61 individuals in a university setting, with five types of pairwise relations: current working, leisure activities, lunch companionship, co-authorship, and Facebook friendship. These networks form a \(61 \times 61 \times 5\) multilayer adjacency tensor, facilitating the study of social group structures and community detection.
\item\textbf{YSCGC}~\cite{Yeung2003}: This gene co-expression dataset contains 205 genes under four functional categories, observed over 4 replicated experimental conditions. The multilayer network is constructed by thresholding pairwise gene expression similarities, resulting in a \(205 \times 205 \times 4\) binary adjacency tensor for community detection and functional module discovery.
\item\textbf{WAT}~\cite{DeDomenico2015}: The World Agricultural Trade (WAT) dataset describes trading relationships of 130 major countries across 32 agricultural products in 2010. We represent this as a \(130 \times 130 \times 32\) multilayer network, with each layer indicating trade interactions for a specific product. This dataset is used for multilayer link prediction tasks.
\item\textbf{Stack Overflow}~\cite{paranjape2017motifs}: The Stack Overflow dataset captures temporal interactions between users on the technical Q\&A platform. We construct a temporal multiplex network by discretizing continuous timestamps into 5 chronological snapshots. We represent this as a \(2.58 \times 10^6 \times 2.58 \times 10^6 \times 5\) tensor, containing approximately 48 million interactions. This dataset serves as a large-scale benchmark to evaluate the scalability and performance of the model on massive temporal graphs.
\item\textbf{DBLP}~\cite{backstrom2006group}: The DBLP dataset describes the co-authorship network of computer science researchers. We represent this as an \(N \times N \times 5\) multilayer network, where the five layers correspond to distinct temporal snapshots of co-authorship history. Each layer indicates whether two researchers co-authored a paper during that specific time period. This dataset is used to evaluate the model's scalability and link prediction performance on varying graph sizes up to \(N \approx 300,000\).
\end{itemize}

{These datasets encompass diverse network sizes and structural properties, ranging from standard benchmarks to large-scale networks such as DBLP (up to 300k nodes) and the massive Stack Overflow dataset (over 2.5 million nodes), providing a robust testbed for the effectiveness and generalizability of T-GINEE in multilayer network representation learning, community detection, and link prediction.}

\subsection{Evaluation Metrics}
Model performance was primarily evaluated using the Area Under the ROC Curve (AUC) on the test set, a standard metric for link prediction tasks. Additionally, we tracked both the binary cross-entropy (BCE) loss component (measuring prediction accuracy) and the GEE loss component (measuring correlation structure modeling) throughout training.

\subsection{Implementation Details}
The T-GINEE model was implemented in PyTorch, leveraging CP decomposition for efficient tensor factorization of multilayer graphs. The architecture employs node embeddings \(\alpha \in \mathbb{R}^{n \times d}\) and layer embeddings \(\beta \in \mathbb{R}^{M \times d}\), constructing a parameter tensor \(\Theta \in \mathbb{R}^{n \times n \times M}\) through CP decomposition, which is passed through a logistic function to predict edge probabilities. After hyperparameter tuning, we selected an embedding dimension \(d = 32\) for our synthetic dataset experiments, using the Adam optimizer with a learning rate of \(0.01\) and weight decay of \(10^{-5}\). Training proceeded with a batch size of 10,000 edges for 50 epochs, with the regularization weight between BCE loss and GEE loss set to \(0.1\). The working covariance matrix was updated every 5 epochs with a smoothing factor of \(0.9\).

We performed a grid search over embedding dimensions \(d \in \{16, 32\}\), learning rates in \(\{0.001, 0.01\}\), and regularization weights in \(\{0.01, 0.1, 0.5\}\), selecting the configuration with highest validation AUC. Dataset partitioning followed an \(80\%\)/\(10\%\)/\(10\%\) split for training/validation/testing with a fixed random seed of 42. All experiments were conducted with PyTorch 2.2.2, with an average training time of approximately 20 minutes per dataset.

\subsection{Baselines}
To comprehensively evaluate the effectiveness of our proposed T-GINEE model, we compare it against a diverse set of baseline methods, encompassing classical spectral algorithms, tensor decompositions, matrix factorization approaches, {and graph neural network (GNN) references}. The \emph{Mean Adjacency Spectral Embedding (MASE)}~\cite{han2015consistent} approach computes the average adjacency matrix across layers and performs spectral embedding via SVD, providing a simple yet effective baseline. The \emph{Non-negative Tucker Decomposition (NNTuck)}~\cite{aguiar2024tensorfactorizationmodelmultilayer} method performs a non-negative Tucker tensor factorization of the multilayer adjacency tensor, optimizing factor matrices using multiplicative updates under KL-divergence loss, with three variants considering different assumptions on layer interaction. The \emph{Spectral Kernel-based Clustering (SPECK)}~\cite{paul2020spectral} aggregates spectral information from the Laplacian matrices of all layers, constructing a consensus embedding for clustering. \emph{HOSVD-Tucker}~\cite{jing2021community} applies higher-order singular value decomposition with Tucker decomposition to the adjacency tensor, capturing intricate multiway interactions. \emph{Layer-wise Spectral Embedding (LSE)}~\cite{lei2020consistent} performs spectral clustering on each layer independently before combining results through consensus or aggregation. \emph{CP decomposition}~\cite{pereyra1973efficient} factorizes the adjacency tensor into a sum of rank-one tensors using an alternating least-squares implementation, while \emph{Tucker decomposition}~\cite{Tucker_1966} generalizes CP by allowing a core tensor and separate factor matrices for each mode. \emph{Non-negative Matrix Factorization (NMF)}~\cite{paatero1994positive} decomposes each adjacency matrix into non-negative factors, optionally weighting layers and applying \(\ell_1\) regularization, with consensus structure inferred via aggregation of factor matrices. Finally, \emph{Singular Value Decomposition (SVD)}~\cite{kolda2009tensor} is applied to each adjacency matrix or to the mean/concatenated adjacency to extract low-rank node embeddings.

We additionally include GNN reference points: single-layer GCN and GraphSAGE on aggregated graphs, and multiplex-specific MGCN and MR-GCN. These GNN baselines typically use node features and are often formulated for semi-supervised prediction, whereas T-GINEE uses only the adjacency tensor for unsupervised embedding. We therefore interpret these comparisons as complementary reference points rather than perfectly matched model classes.

\section{Triangle Prediction on the Krackhardt Dataset}
\label{sec:case}

To further evaluate T-GINEE, we conducted a triangle prediction study on the Krackhardt dataset, which contains interpersonal relationship networks and is well-suited for cross-relationship prediction. We focused on triangular structures: for each triangle, one edge was removed during training and then predicted by the models. Table~\ref{tab:triangle-pred} reports accuracies across methods. T-GINEE achieves \(73.36\%\) accuracy, a \(17\%\) absolute improvement over the next best method (NNTUCK), thereby demonstrating its unique ability to leverage multilayer dependencies to infer missing relationships and validating the practical effectiveness of our theoretical framework.

\begin{table}[t]
\centering
\caption{Accuracy of triangular relationship prediction on the Krackhardt dataset.}
\label{tab:triangle-pred}
\begin{tabular}{l c}
\toprule
Method & Accuracy \\
\midrule
HOSVD   & 40.33\% \\
SPECK   & 50.10\% \\
NNTUCK  & 56.42\% \\
T-GINEE & \textbf{73.36\%} \\
\bottomrule
\end{tabular}
\end{table}

\section{Ablation, Robustness, and Interpretability}
\label{sec:ablation}

\subsection{Working Covariance Ablation}

To isolate the contribution of covariance modeling, we compare the estimated working covariance $W$ against the independence working structure $W=I_M$ while keeping the GEE regularization active. This differs from setting the GEE weight to zero, which removes the entire GEE term rather than only the covariance modeling component. Table~\ref{tab:cov-ablation} shows that estimating $W$ consistently improves AUC, with a larger gain on the larger DBLP-5K setting. Learning $W$ also reduces estimation variance on DBLP-5K, where the standard deviation decreases from $0.0286$ to $0.0204$.

\begin{table}[t]
\centering
\caption{Working covariance ablation with identical model and optimization settings.}
\label{tab:cov-ablation}
\begin{tabular}{lccc}
\toprule
Dataset & Estimated $W$ & $W=I_M$ & $\Delta$ \\
\midrule
Homogeneous synthetic ($n=200$) & 0.6150 $\pm$ 0.0039 & 0.6080 $\pm$ 0.0034 & +0.0070 \\
DBLP-5K ($n=5{,}000$) & 0.6470 $\pm$ 0.0204 & 0.6185 $\pm$ 0.0286 & +0.0285 \\
\bottomrule
\end{tabular}
\end{table}

\subsection{Noise Robustness}

We evaluate robustness by randomly adding and deleting edges at different noise ratios. As shown in Table~\ref{tab:noise}, T-GINEE degrades gradually rather than collapsing abruptly, retaining $87\%$ of its clean AUC at a $30\%$ noise ratio. The working covariance provides a mechanism for adaptive reweighting because each residual vector is weighted by $\widehat{\Sigma}_{i,j}^{-1}$; layers or residual patterns with larger estimated variability receive lower effective precision weight. We do not claim that this is an optimal automatic denoising guarantee, but the experiment supports the interpretation that covariance weighting improves robustness to heterogeneous residual noise.

\begin{table}[t]
\centering
\caption{Noise robustness under random edge additions and deletions.}
\label{tab:noise}
\begin{tabular}{lcc}
\toprule
Noise ratio & AUC & Performance retention \\
\midrule
0\% & 0.7745 & 100\% \\
10\% & 0.7570 & 98\% \\
20\% & 0.6836 & 88\% \\
30\% & 0.6701 & 87\% \\
40\% & 0.6145 & 79\% \\
50\% & 0.5757 & 74\% \\
\bottomrule
\end{tabular}
\end{table}

\subsection{Interpretability of CP Factors}

The symmetric CP structure provides interpretable components: $\alpha\in\mathbb{R}^{n\times R}$ captures latent node roles and $\beta\in\mathbb{R}^{M\times R}$ captures how each layer weights these latent dimensions. On DBLP, heatmaps of the learned node embeddings show structured profiles, with nodes of similar connectivity patterns receiving similar latent representations. Layer-specific weights in $\beta$ exhibit distinct patterns across the five DBLP layers, indicating that different temporal relation types emphasize different latent dimensions. The learned working covariance matrix is
\[
W = \begin{bmatrix}
0.490 & 0.014 & 0.021 & 0.018 & 0.014 \\
0.014 & 0.481 & 0.005 & 0.024 & 0.023 \\
0.021 & 0.005 & 0.444 & 0.022 & 0.028 \\
0.018 & 0.024 & 0.022 & 0.473 & -0.005 \\
0.014 & 0.023 & 0.028 & -0.005 & 0.493
\end{bmatrix}.
\]
The near-diagonal structure indicates weak positive cross-layer correlations in DBLP, which is consistent with heterogeneous academic relation types. This demonstrates that the GEE component identifies the inter-layer residual-correlation structure from data rather than assuming it a priori.


\section{Hyperparameter Analysis}
\label{sec:hyper}
\begin{figure*}[t]
    \centering
    \begin{subfigure}[b]{0.32\columnwidth}
        \centering
        \includegraphics[width=\linewidth]{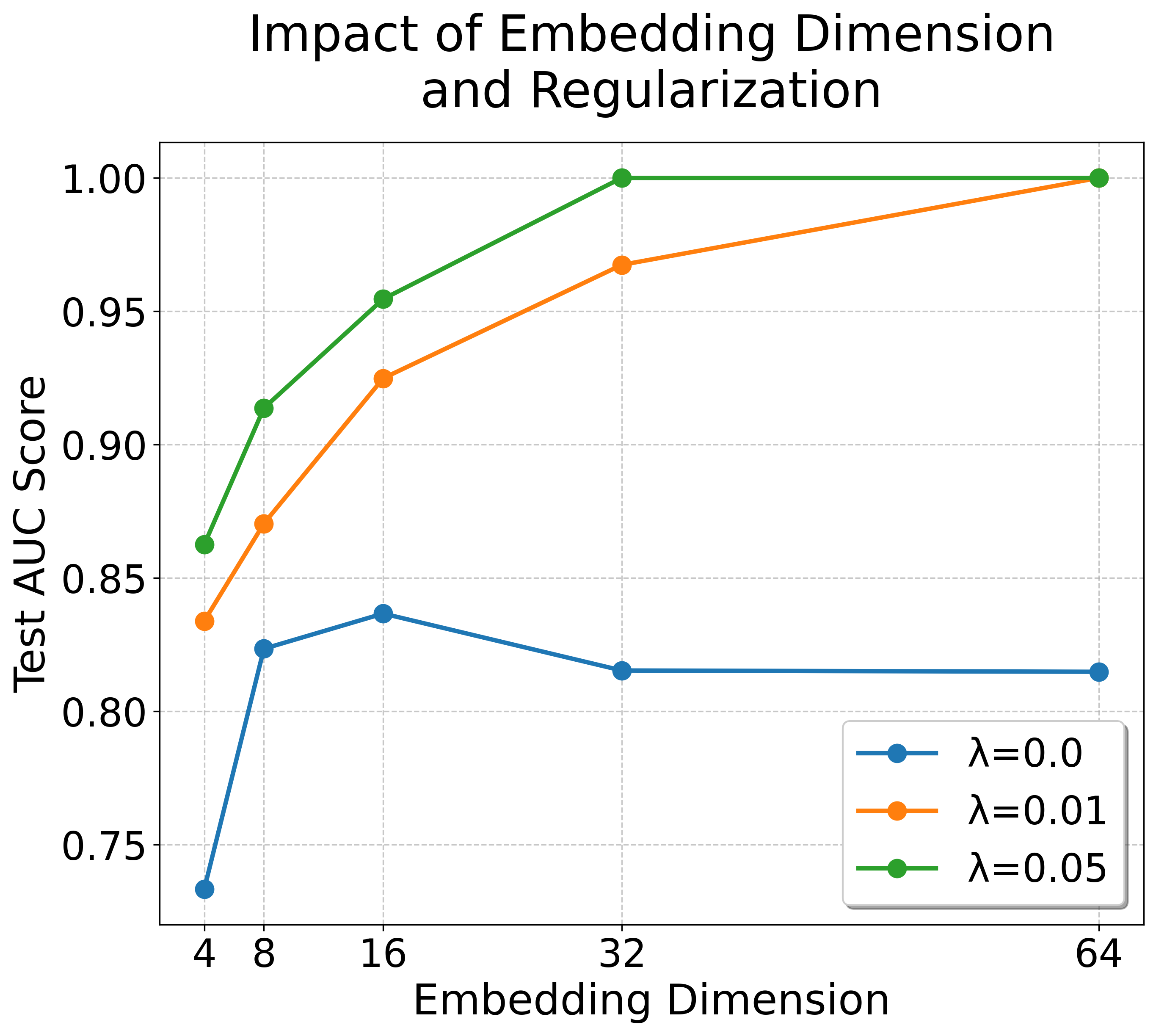}
        \caption{Impact of embedding dimension and regularization weight.}
        \label{fig:dimension_analysis}
    \end{subfigure}
    \hfill
    \begin{subfigure}[b]{0.32\columnwidth}
        \centering
        \includegraphics[width=\linewidth]{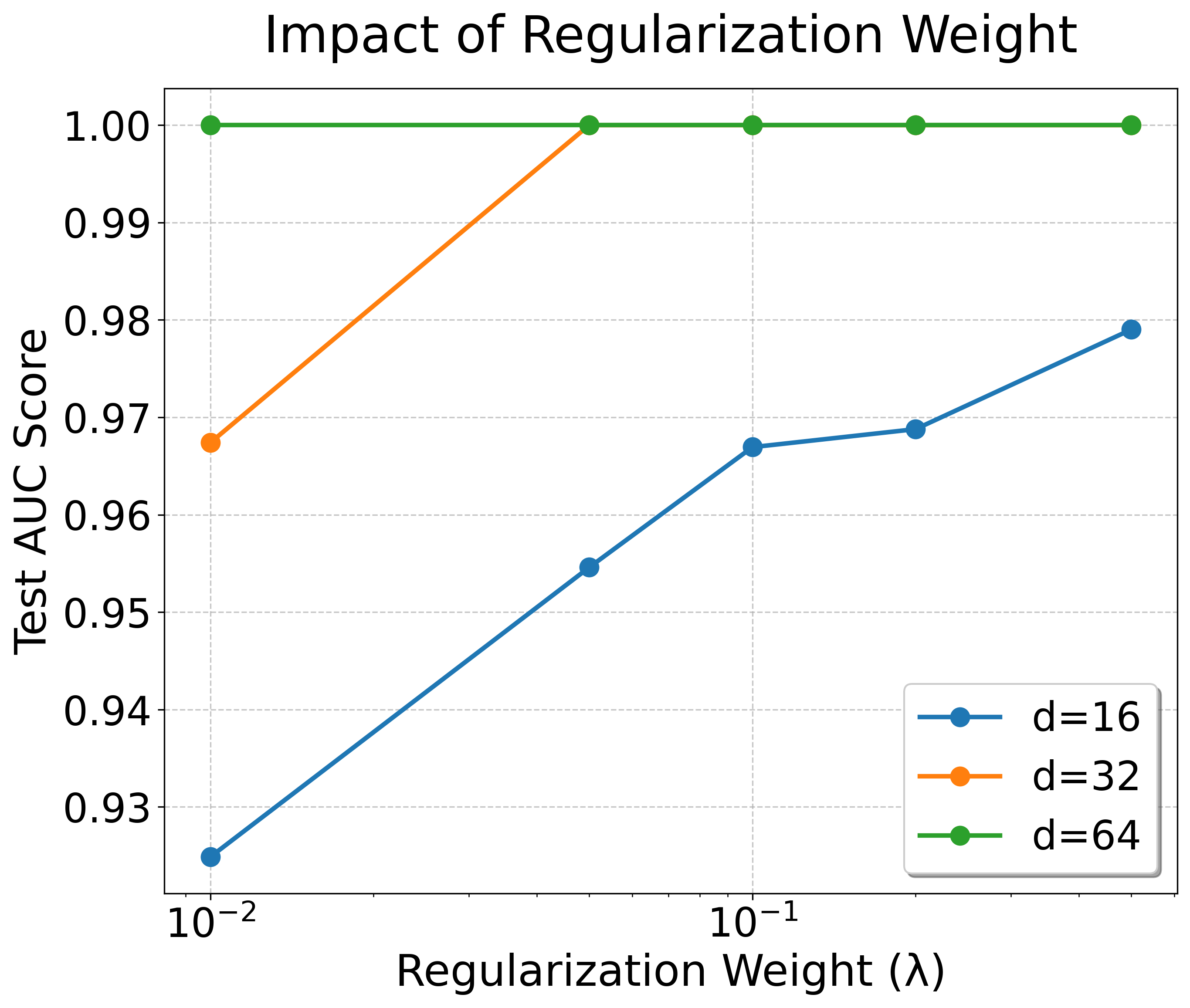}
        \caption{Effect of regularization weight across different dimensions.}
        \label{fig:reg_analysis}
    \end{subfigure}
    \hfill
    \begin{subfigure}[b]{0.32\columnwidth}
        \centering
        \includegraphics[width=\linewidth]{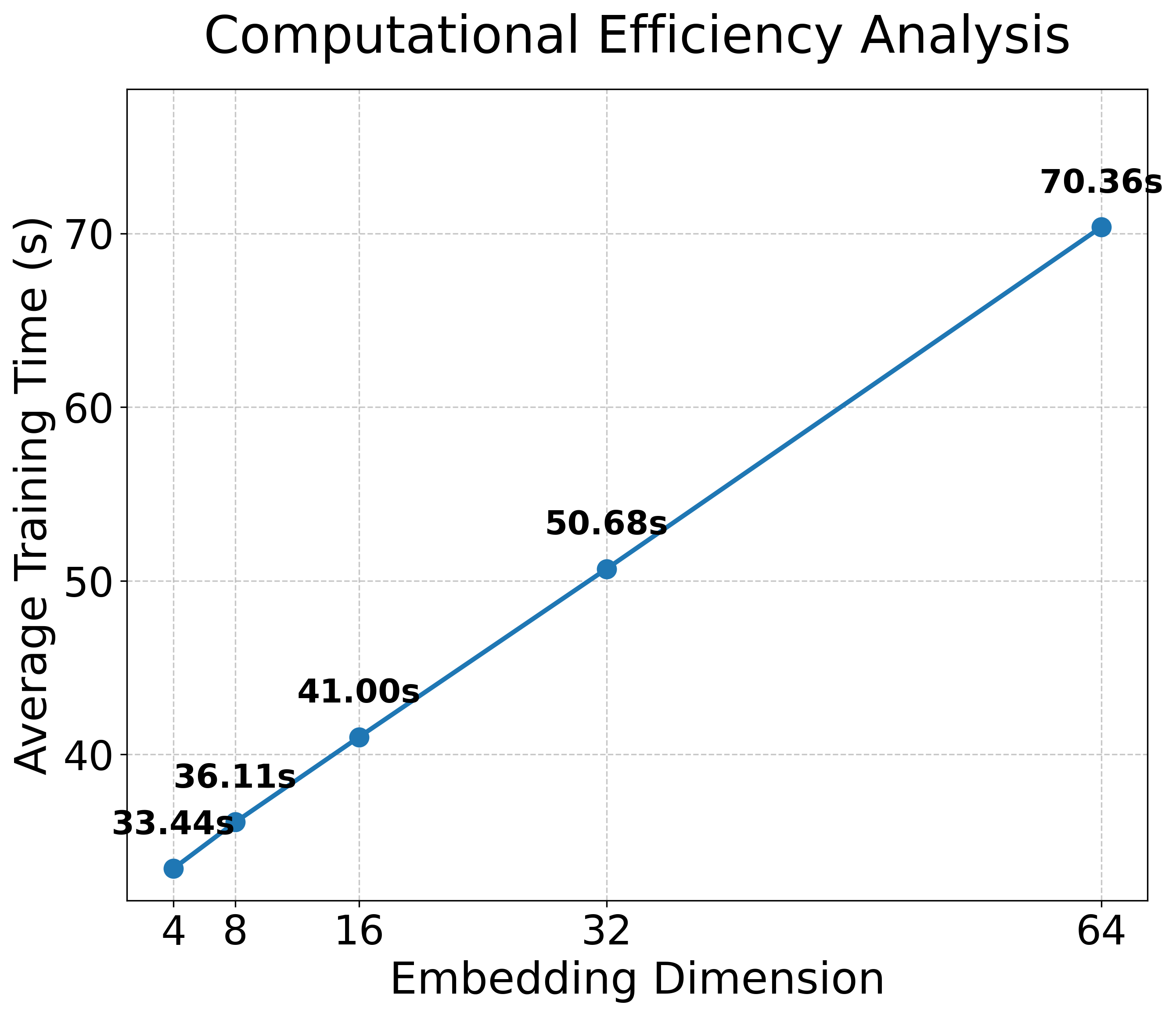}
        \caption{Computational efficiency across dimensions.}
        \label{fig:time_analysis}
    \end{subfigure}
    \caption{Comprehensive analysis of model hyperparameters: (a) embedding dimension impact, (b) regularization effect, and (c) computational efficiency.}
    \label{fig:all_analysis}
\end{figure*}

We conduct a comprehensive hyperparameter analysis to investigate the impact of embedding dimension and regularization weight on model performance. All experiments are performed on the same dataset with consistent evaluation metrics.

\noindent\textbf{Impact of embedding dimension.}
As shown in Figure~\ref{fig:dimension_analysis}, the relationship between embedding dimension and model performance demonstrates clear patterns across different regularization settings. The experimental results show that increasing the embedding dimension generally improves model performance, with substantial gains observed when moving from $4$ to $32$ dimensions. Notably, with appropriate regularization ($\lambda=0.05$), the model achieves perfect prediction accuracy (AUC $= 1.0$) when the embedding dimension reaches $32$. Further increasing the dimension to $64$ maintains this optimal predictive performance but introduces additional computational overhead.

\noindent\textbf{Effect of regularization.}
The impact of the regularization weight is illustrated in Figure~\ref{fig:reg_analysis}. For larger dimensions ($32$ and $64$), the model becomes more sensitive to regularization parameters, achieving optimal performance with smaller regularization weights ($\lambda=0.01\text{--}0.05$). This suggests that proper regularization calibration is crucial for preventing overfitting in the embedding space, especially in higher-dimensional representation spaces where the model capacity increases substantially.

\noindent\textbf{Computational efficiency.}
Figure~\ref{fig:time_analysis} shows the relationship between embedding dimension and computational cost. The training time increases approximately linearly with the embedding dimension, from $33.22$ seconds for 4-dimensional embeddings to $70.32$ seconds for 64-dimensional embeddings. This linear scaling demonstrates the computational efficiency of our model, making it practical for real-world applications.

Based on the comprehensive results derived from these analyses, we recommend using an embedding dimension of $32$ paired with a regularization weight of $0.05$ as the default configuration, as it consistently provides optimal performance (AUC $= 1.0$) while maintaining reasonable computational efficiency. This configuration effectively strikes a robust balance between model expressiveness, generalization ability, and computational cost.

\section{Limitations}
\label{sec:limts}

While T-GINEE provides a robust framework for tensor-based multilayer graph representation learning, several limitations remain. First, the formal consistency and asymptotic normality results in this paper apply to the full-batch estimating-equation formulation under the stated assumptions. The scalable mini-batch implementation with dynamic negative sampling is a practical optimization extension motivated by the same objective, but its full asymptotic theory is not established here. We therefore present the large-scale experiments as empirical evidence of scalability rather than as a direct verification of the full-batch asymptotic regime.

Second, T-GINEE assumes a shared node set across layers. For partially aligned multilayer networks, one possible extension is to restrict the GEE summation to aligned node pairs or to introduce missing-correspondence indicators. For heterogeneous multilayer networks with different node types across layers, a single shared embedding matrix $\alpha$ may no longer be sufficient, and layer-specific or type-specific embeddings would likely be required. Extending the current theory and algorithm to partial alignment, noisy anchors, and heterogeneous node types is an important direction for future work.

Third, the model relies on sufficient structural information for stable parameter estimation. Edge density alone is not decisive: the effective sample size and the overdetermination ratio $N/((n+M)R)$ also determine conditioning. Extremely sparse small graphs can be unstable because they contain few observed edges, whereas very large sparse graphs may still provide enough absolute edge information. Our modified logit link function, $g(x) = \log(x/(s-x))$, incorporates a sparsity coefficient $s$ that can adapt to varying density levels, but additional structural assumptions or sparsity constraints on the factors may be required in highly underdetermined regimes.

Finally, ethical and scope considerations must be addressed. While improved representation learning enhances predictive accuracy, its application to social networks warrants caution. Without appropriate privacy safeguards, granular modeling capabilities could potentially be misused for user profiling or surveillance. Furthermore, biases inherent in the input data may be preserved or amplified, necessitating rigorous fairness evaluations in sensitive applications. We also emphasize that T-GINEE is designed as a statistical regularization framework; graph neural encoders can be complementary, but integrating them with the present theory is left for future work.

\section{LLM Usage Disclosure}
\label{sec:LLM}
We disclose our use of large language models (LLMs) in preparing this manuscript. We employed OpenAI’s ChatGPT and related tools for language-level support, including polishing writing, improving grammar, and enhancing clarity. The core scientific content of T-GINEE, including the theoretical development of tensor-based generalized estimating equations, formal proofs of consistency and asymptotic normality, and the design and execution of experiments on synthetic and real-world multilayer networks, was conceived, developed, and validated by the authors without LLM assistance.
During manuscript preparation, LLMs were used selectively to (i) generate alternative formulations of technical explanations for readability, (ii) assist with retrieval and condensation of related work, and (iii) suggest phrasing for summarizing experimental results. In all cases, LLM outputs were reviewed, validated, and revised by the authors.
For implementation, we did not rely on LLMs to design or optimize the core T-GINEE algorithm. LLMs were only used occasionally to refactor non-core utilities such as dataset preprocessing scripts, figure plotting functions, and \LaTeX{} formatting of equations and tables. All quantitative results, model derivations, and inference procedures reported in this paper are the independent work of the authors and were verified without LLM involvement.

In summary, LLMs supported editing and presentation, while the substantive scientific contributions of T-GINEE are entirely original to the authors.

\end{document}